\documentclass{article}

\usepackage{mymacros}

\usepackage{PRIMEarxiv}

\usepackage{hyperref}       % hyperlinks
\usepackage{url}            % simple URL typesetting
\usepackage{amsfonts}       % blackboard math symbols
\usepackage{graphicx}       % graphics
% \graphicspath{{media/}}     % organize your images and other figures under media/ folder
\usepackage{float}
%Header
% \pagestyle{fancy}
% \thispagestyle{empty}
% \rhead{ \textit{ }} 

% Update your Headers here
% \fancyhead[LO]{Sufficient sample size}
% \fancyhead[RE]{Firstauthor and Secondauthor} % Firstauthor et al. if more than 2 - must use \documentclass[twoside]{article}

%% Title
% \title{Internally Valid Inference when Assignment Probabilities are Known: Structural Representations and a Connection to Policy Evaluation
% %%%% Cite as
% %%%% Update your official citation here when published 
% % \thanks{\textit{\underline{Citation}}: 
% % \textbf{Authors. Title. Pages.... DOI:000000/11111.}} 
% }
\title{Sample size planning for conditional counterfactual mean estimation with a K-armed randomized experiment}

\author{
  Gabriel Ruiz\\
  % Experience Intelligence\\
  Adobe Inc.\\
  San Jose, CA, U.S.A.\\
  \texttt{garuiz@adobe.com,17ruiz17@gmail.com} \\
  %% examples of more authors
  %  \And
  % Coauthors \\
  % Affiliation \\
  % % Univ \\
  % % City\\
  % \texttt{\{email,$\dots$\}@email.com} \\
  % \AND
  % Coauthor \\
  % %% Affiliation \\
  % %% Address \\
  % \texttt{email@email} \\
  %% \And
  %% Coauthor \\
  %% Affiliation \\
  %% Address \\
  %% \texttt{email} \\
  %% \And
  %% Coauthor \\
  %% Affiliation \\
  %% Address \\
  %% \texttt{email} \\
}

\begin{document}

\maketitle

\begin{abstract}
We cover how to determine a sufficiently large sample size for a $K$-armed randomized experiment in order to estimate conditional counterfactual expectations in data-driven subgroups. The sub-groups can be output by any feature space partitioning algorithm, including as defined by binning users having similar predictive scores or as defined by a learned policy tree. After carefully specifying the inference target, a minimum confidence level, and a maximum margin of error, the key is to turn the original goal into a simultaneous inference problem where the recommended sample size to offset an increased possibility of estimation error is directly related to the number of inferences to be conducted. Given a fixed sample size budget, our result allows us to invert the question to one about the feasible number of treatment arms or partition complexity (e.g. number of decision tree leaves). Using policy trees to learn sub-groups, we evaluate our nominal guarantees on a large publicly-available randomized experiment test data set. 
% We demonstrate our approach with out-of-the-box software packages for learning optimal treatment assignments across learnable sub-groups.

\end{abstract}

% keywords can be removed
% \keywords{treatment effect \and concentration inequality \and statistical power analysis \and batched learning \and contextual bandit}

\tableofcontents

\section{Introduction}

\begin{figure}[H]
    \centering
    \includegraphics[width=0.47\textwidth]{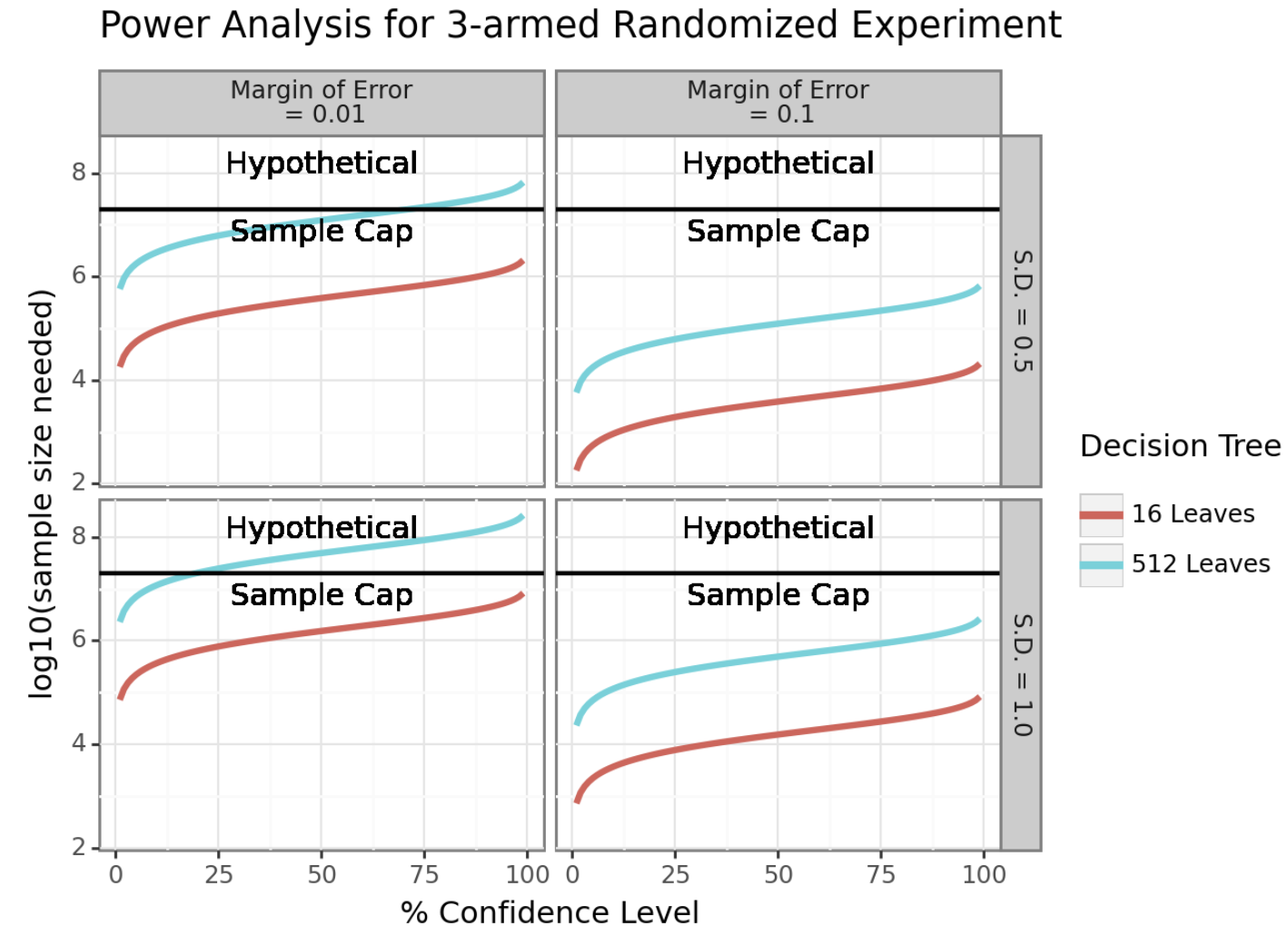}
    \caption{Example power analysis showing a sufficient sample size needed for jointly inferring the expected potential outcome in 3 treatment arms for a new randomly sampled individual. Here, individuals can fall into one of the sub-groups defined by a partition of the feature space given by an arbitrary decision tree. In addition to the number of sub-groups, the sample size is determined by a desired confidence level, margin of error, and unexplained outcome variation (or standard deviation). }
    \label{fig:sufficient}
\end{figure}

Randomized experiments (REs) are, in some sense, the gold standard for establishing causality. Although external validity is not guaranteed \citep{lei2023policy}, REs are a good starting point when they can be conducted. To this end, we consider the problem of sample size planning when we would like to contrast $K$ treatment groups. Noting the realistic possibility that the potential outcomes across possible treatments for one individual need not be similar to the potential outcomes of another individual, we consider sample size planning in the context of conditional counterfactual means, where the conditioning is with respect to baseline features. 

We note that reasoning about treatment effects at an individual-level is difficult, if not impossible \citep{cateVsITE2021}, even with an infinitely sized sample. For example, see \citet{ruizPadilla2022Pibt} for the limited statistical inference that is possible on the distribution of individual treatment effects. For a goal that is more immediately within reach, we turn our sample size planning problem into one that attempts to study counterfactuals based on their \textit{average} at the sub-group-level, such as defined by, but not limited to, a causal regression tree \citep{causalTree} or a policy tree \citep{policytree2021,poliTreeSofware}. We work under the flexibility that the sub-groups can be learned by \textit{any} machine learning model that partitions the space of baseline co-variates, or features, into disjoint subsets. We illustrate an example partition of the feature space using a policy tree \citep{policytree2021,poliTreeSofware} in Section \ref{sec:empiricalEval}.

In determining a sufficient sample size, we require a desired margin of error, frequentist confidence-level, model complexity--the number of subsets in the partition, and plausible bounds on outcome variation. In practice, this can be tuned to comply with realistic sample size constraints. Once these parameters are specified, the sufficient sample size needed requires that we enforce a minimum sample size constraint in each subset for each treatment group. To attain nominal guarantees with such a partition, we require honest prediction \citep{causalTree,grf2019} wherein outcomes for a holdout set in the training data are omitted from the partition splitting rules to be learned.

The key in our sample size planning is that in reasoning about the effect of $K$ treatments across such a partition, we are conducting a simultaneous inference of means. Given that mistakes are more likely with an increased number of simultaneous inferences, a sufficient sample size is needed to ensure a desired accuracy. We formalize the amount that is sufficient in Propositions \ref{prop:CLTBased}, \ref{prop:boundedOutcome}, and \ref{prop:boundedOutcomeBoundedVar}.

In the remainder of this section, we introduce our estimator and its super-population analogue. In section \ref{sec:methods}, we discuss the "high confidence" inference guarantee we are after, assumptions, and the main result that allows us to determine sample size as in the example shown in Figure \ref{fig:sufficient}. Given a fixed sample size budget, we discuss in \S~\ref{sec:fixedSampleSize} how our result allows us to invert the question to one about the feasible number of treatment arms or partition complexity (e.g. number of decision tree leaves). Next, Section \ref{sec:empiricalEval} empirically evaluates whether our main result holds using a semi-synthetic simulation from a large publicly available a/b test. Finally, Section \ref{sec:discussion} concludes with a discussion of limitations and possible extensions.

\subsection{Feature space partitioning model for conditional counterfactual means}

Suppose we have treatments $w=1,\dots,K$ we would like to contrast. We have data of baseline covariates, treatment exposure, and outcome tuples $\{(X_i,W_i,Y_i)\}_{i=1}^n$ with which we would like to approximate the non-trivial conditional counterfactual mean function,
\[
\mu^*(x,w)\ =\ \EC{Y_i(w)}{X_i=x}.
\]
Additionally, it can be of interest to study the pair-wise differences of these means, known as the conditional average treatment effect (CATE) function:
\[
\tau^*(x,w,w')\ =\ \EC{Y_i(w)-Y_i(w')}{X_i=x};\ w,w'\in\{1,\dots,K\}.
\]
Here, $\curl{Y_i(w),Y_i(w')}$ are known as potential outcomes \citep{rosenbaumRubinPropScore1983,imbens_rubin_2015}, and equivalently as counterfactuals \citep{pearl2000causality,hernan2023whatif}. These represent individual $i$'s hypothetical outcomes when we force, typically in a randomized experiment, the exposure variable $W_i$ to be $w$ or $w'$. 

Suppose we partition the feature space into $L\geq 1$  subsets
\[
\xcal_1,\dots,\xcal_L\ \subset\ \xcal.
\]

Denote:
\[
\ell:\ \xcal\to\{1,\dots,L\}
\]
as the mapping that assigns $\ell(x)=l$ if $x\in\xcal_l$. We will call $l=1,\dots,L$ a leaf, as it is helpful to think of the partition of $\xcal$ as arising from a decision tree \citep{Breiman2017-ay}. Denote the prediction indices in group $w=1,\dots,K$ in leaf $l=1,\dots,L$ as
\[
\ical(w,l)=\curl{i\in\scal:\ W_i=w,\ell(X_i)=l}\subset\{1,\dots,n\},
\]
and let its sample size be denoted by
\[
n_{wl}=\abs{\ical(w,l)}.
\]

Here, $\scal\subset\{1,\dots,n\}$ denotes the holdouts for honest prediction \citep{causalTree,grf2019}.

Making use of the partition of the feature space, suppose our approximation of $\mu^*(x,w)$ is given by
\[
\hat{\mu}(x,w)\mapsto\frac{1}{ n_{w\ell(x)} }\sum_{i\in\ical(w,\ell(x))}Y_i,
\]
the sample mean outcome in treatment group $w$ and leaf $\ell(x)$. Meanwhile, the approximation to $\tau^*(x,w,w')$, denoted as $\hat{\tau}(x,w,w')$, is given by:
\[
(x,w,w')\mapsto\hat{\mu}(x,w)-\hat{\mu}(x,w'),
\]
an estimator that is in strong analogy to, but which the partition mapping $\ell(x)$ need not be limited to, the causal decision trees of \citet{causalTree}.

\subsubsection{A manageable inference target in practice\label{sec:manageableInfTarget}}

Let $(\xnp,\wnp,\ynp)$ be a new independent draw such that \[\begin{aligned}
&\ynp\mid \wnp=w,\ell(\xnp)=\ell(x)\\
\iid&\ Y_i\mid W_i=w,\ell(X_i)=\ell(x);\ i=1,\dots,n.
\end{aligned}\]
While we mean to approximate the population-level quantity $\mu^*(x,w)$, the population-analogue of $\hat{\mu}(x,w)$ is actually
\[
\mu(x,w)= \EC{\ynp}{\wnp=w,\ell(\xnp)=\ell(x)}.
\]
This will be the target of our inference. 

Note that $\ical(w,l)$ is random because the tuples $(X_i,W_i)$ it is based on are random. Conditional on $\ical(w,\ell(x))$, we have that $\hat{\mu}(x,w)$ is unbiased for $\mu(x,w)$ by its construction:
\[\begin{aligned}
&\EC{\hat{\mu}(x,w)}{ \ical(w,\ell(x)) }\\
=\ &\frac{1}{n_{w\ell(x)} }\sum_{i\in\ical(x,w) }\EC{Y_i}{ W_i=w,\ell(X_i)=\ell(x),\ical(w,\ell(x)) }\\
=\ &\EC{\ynp}{\wnp=w,\ell(\xnp)=\ell(x)}.
\end{aligned}\]

The estimator $\hat{\mu}(x,w)$ is also unbiased for the conditional mean of an individual's counterfactual outcome
\[
\EC{\ynp(w)}{\ell(\xnp )=\ell(x)},
\]
provided the balancing score condition
\[
\{\ynp(w),\ynp(w')\}\indep \wnp|\ell(\xnp)=\ell(x)
\]
holds \citep{rosenbaumRubinPropScore1983,imbensReviewATE2004,imbens_rubin_2015, overlap2021,hernan2023whatif}. This condition holds in a randomized experiment setting when $X_i$ are pre-treatment features that are independent of treatment assignment. Importantly, this balancing condition with respect to the discrete partition mapping $\ell(x)$ need not hold outside of the experimental setting.

We similarly have that conditional on $\ical(w,\ell(x))$ and $\ical(w',\ell(x))$, $\hat{\tau}(x,w,w')$ is unbiased for \[
\tau(x,w,w')\ :=\ \mu(x,w)-\mu(x,w').
\]
The goal of our work will be to ensure the sets $\ical(w,l)$ across $(w,l)\in\curl{1,\dots,K}\times \curl{1,\dots,L}$ have sufficient indices to provide a desired high-confidence accuracy guarantee on $\mu(x,w)$. In addition to accurate inference on $\tau(x,w,w')$, we show such a guarantee also implies accurate inference on the best outcome across treatment arms, on average, in leaf $\ell(x)$: $\max_w\mu(x,w)$. We infer the latter quantity using its empirical analogue: $\max_{w}\hat{\mu}(x,w)$.

% \subsubsection{Related literature motivating our work}

\section{Main result\label{sec:methods}}

We have the following key lemma before stating our main result. In it, denote
\[
\icaln = \curl{ \ical(w,l):\ (w,l)\in\curl{1,\dots,K}\times\curl{1,\dots,L} }
\]
as the partition of the honest set indices $\scal$ into the mutually disjoint subsets. 

\begin{lemma}[Implications of accurate conditional counterfactual mean estimation\label{lem:keyIneqs}]\ \\

Let $\alpha\in(0,1)$ and $\epsilon>0$. Consider a new random test point $X_{n+1}$.\\
If
\begin{equation}\label{eqn:accurateCondMeanRandomX}
\prc{\max_{ w }\abs{ \hat{\mu}( X_{n+1},w )-\mu(X_{n+1},w) } < \epsilon}{\icaln}\\
\geq\ 1-\alpha,
\end{equation}
then
\begin{equation}\label{eqn:bestMeanRandomX}\begin{aligned}
\ &\prc{\abs{ \max_{ w }\hat{\mu}( X_{n+1},w )-\max_{ w }\mu(X_{n+1},w) } < \epsilon}{\icaln}&\geq\ &1-\alpha
\end{aligned}\end{equation}
along with
\begin{equation}\label{eqn:guaranteeLeafMeanDiff}\begin{aligned}
&\prc{\max_{ w\neq w' }\abs{ \hat{\tau}( X_{n+1},w,w' )-\tau(X_{n+1},w,w') }<2\epsilon}{\icaln}&\geq\ &1-\alpha.
\end{aligned}\end{equation}

Moreover, if
\begin{equation}\label{eqn:accurateCondMeanSupX}
\prc{\sup_x\max_{ w }\abs{ \hat{\mu}( x,w )-\mu(x,w) } < \epsilon}{\icaln}\\
\geq\ 1-\alpha,
\end{equation}
then
\begin{equation}\label{eqn:bestMeanSupX}\begin{aligned}
\ &\prc{\sup_x\abs{ \max_{ w }\hat{\mu}( x,w )-\max_{ w }\mu(x,w) } < \epsilon}{\icaln}&\geq\ &1-\alpha
\end{aligned}\end{equation}
along with
\begin{equation}\label{eqn:guaranteeSupLeafMeanDiff}\begin{aligned}
&\prc{\sup_x\max_{ w\neq w' }\abs{ \hat{\tau}( x,w,w' )-\tau(x,w,w') }<2\epsilon}{\icaln}&\geq\ &1-\alpha.
\end{aligned}\end{equation}

\end{lemma}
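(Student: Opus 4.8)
The plan is to prove each implication as an elementary deterministic consequence of the pointwise accuracy event, so that probability $1-\alpha$ transfers directly. The core observation is that both conclusions follow from a single deterministic fact: if a collection of real numbers $\{a_w\}$ approximates another collection $\{b_w\}$ uniformly with $\max_w |a_w - b_w| < \epsilon$, then (i) $|\max_w a_w - \max_w b_w| < \epsilon$ and (ii) $\max_{w\neq w'} |(a_w - a_{w'}) - (b_w - b_{w'})| < 2\epsilon$. Once these two inequalities are established as facts about real numbers, the entire lemma reduces to a set-inclusion argument: the accuracy event implies the conclusion events, so the former's probability lower-bounds the latter's.

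First I would prove the max-inequality (i). The standard argument is the Lipschitz property of the $\max$ functional in the sup-norm. Writing $a_w \le b_w + \epsilon$ for every $w$, take the max over $w$ on both sides to get $\max_w a_w \le \max_w b_w + \epsilon$; by symmetry (swapping the roles of $a$ and $b$), $\max_w b_w \le \max_w a_w + \epsilon$, so $|\max_w a_w - \max_w b_w| \le \epsilon$, and strictness follows since the finite maximum is attained and each strict inequality is preserved. Applied with $a_w = \hat\mu(X_{n+1},w)$ and $b_w = \mu(X_{n+1},w)$, this shows the event in \eqref{eqn:accurateCondMeanRandomX} is contained in the event in \eqref{eqn:bestMeanRandomX}.

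Next I would prove the difference-inequality (ii) by the triangle inequality: $|(a_w - a_{w'}) - (b_w - b_{w'})| \le |a_w - b_w| + |a_{w'} - b_{w'}| < 2\epsilon$, taking the max over the pairs $w \neq w'$. This shows \eqref{eqn:accurateCondMeanRandomX} is also contained in \eqref{eqn:guaranteeLeafMeanDiff}. Since both conclusion events contain the hypothesis event, conditioning on $\icaln$ and applying monotonicity of conditional probability gives both lower bounds of $1-\alpha$ at once.

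For the supremum versions \eqref{eqn:bestMeanSupX} and \eqref{eqn:guaranteeSupLeafMeanDiff}, the same two real-number inequalities apply pointwise in $x$; I would note that if $\sup_x \max_w |\hat\mu(x,w) - \mu(x,w)| < \epsilon$ then for each fixed $x$ the bound $\max_w|\hat\mu(x,w)-\mu(x,w)| \le \epsilon$ holds, and inequalities (i)–(ii) yield the conclusions at that $x$; taking $\sup_x$ preserves the bound. The only point requiring mild care — and the main obstacle, though a minor one — is tracking the strict versus non-strict inequalities when passing through the supremum: the pointwise bounds are strict but the supremum may only satisfy $\sup_x(\cdot) \le \epsilon$, so I would either work with a uniform bound that stays strictly below $\epsilon$ on the hypothesis event or argue that the strict hypothesis sup-bound already gives a uniform gap. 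Everything else is routine set containment, so no probabilistic machinery beyond monotonicity is needed.
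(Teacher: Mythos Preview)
Your proposal is correct and follows essentially the same approach as the paper: both establish the deterministic implications (the Lipschitz property of $\max$ in the sup-norm and the triangle inequality for differences) and then transfer the $1-\alpha$ bound by monotonicity. The paper phrases the monotonicity step via a law of total probability decomposition of the complement event rather than direct set inclusion, but the underlying argument is identical, and your handling of the strict-inequality issue in the supremum case is fine (indeed, since $\hat\mu$ and $\mu$ depend on $x$ only through the finitely-valued $\ell(x)$, the supremum is a finite maximum and strictness is automatic).
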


Lemma \ref{lem:keyIneqs} is saying that an $(\epsilon,\alpha)$ estimation guarantee for each conditional counterfactual mean $\mu(w,\cdot)$, whether at a random point $\xnp$ or uniformly across $x$, implies an $(\epsilon,\alpha)$ estimation guarantee on the best (largest) conditional counterfactual treatment mean. Similarly, accurate estimation of the conditional counterfactual mean $\mu(w,\cdot)$ implies a $(2\epsilon,\alpha)$ guarantee for our estimation of the treatment effect function $\tau(\cdot,w,w')$. 

The proof of Lemma \ref{lem:keyIneqs} is contained in Appendix \ref{append:proofs}. The general idea in the proof is to use a law of total probability decomposition for the probabilities in \eqref{eqn:bestMeanRandomX} and \eqref{eqn:guaranteeLeafMeanDiff}
involving the maximal deviation
\[
\max_{ w }\abs{ \hat{\mu}( X_{n+1},w )-\mu(X_{n+1},w) }.
\]
We follow similar steps for the inequalities in \eqref{eqn:bestMeanSupX} and \eqref{eqn:guaranteeSupLeafMeanDiff}. 

Given Lemma \ref{lem:keyIneqs}, the goal is now to understand the sufficient sample size required per treatment group, per subset of the partition that grants us the premise in either Equation \eqref{eqn:accurateCondMeanRandomX} or \eqref{eqn:accurateCondMeanSupX}. Propositions \ref{prop:CLTBased}, \ref{prop:boundedOutcome}, and \ref{prop:boundedOutcomeBoundedVar} give us this understanding. 

\subsection{Some regularity conditions}
The required sample size depends on our assumption about the data generating mechanism. Some representative assumptions include the following.

\begin{assumption}[Bounded Outcome\label{assump:boundedOutcome}] For known constants $a<b$, suppose that $\pr{a\leq Y_i\leq b}=1$.
\end{assumption}

\begin{assumption}[Known bound on the conditional variance\label{assump:boundedVariance}] Let $\sigma^2>0$ be a known constant. Across $w=1,\dots,K$ and $l=1,\dots,L$, suppose that
\[
\sigma^2_{wl}:=\var{\ynp\mid\wnp=w,\ell(\xnp)=l}\leq\sigma^2.
\]

\end{assumption}

\begin{assumption}[Central Limit Theorem\label{assump:clt}]
Across $w=1,\dots,K$ and $x\in\xcal$, suppose that
\[
\hat{\mu}(x,w)\sim{\rm Normal}\parenth{\mu(x,w),\sigma^2_{w\ell(x)}/n_{wl}}.
\]

\end{assumption}

Section \ref{sec:boundCondVar} discusses strategies for obtaining a bound on the conditional variance. When bounds on the conditional variance are unknown, Remark \ref{rem:stdScale} below discusses how one can proceed with inference on a standardized scale.

\subsection{The main result\label{sec:mainResults}}

\begin{proposition}[Main Result: Central Limit Theorem\label{prop:CLTBased}]\ \\
Suppose Assumptions \ref{assump:boundedVariance} and \ref{assump:clt} hold. 

\begin{itemize}
\item Then for the guarantee in \eqref{eqn:accurateCondMeanRandomX} to hold, it is sufficient that
\begin{equation}\label{eqn:cltIneq}
\min_{w,l} n_{wl}\ \geq\ \parenth{\frac{\Phi^{-1}\parenth{(1-\alpha)^{ \frac{1}{K}}}\times \sigma  }{\epsilon}}^2
\end{equation}
with $\Phi^{-1}$ the inverse Cumulative Distrubiton Function (CDF) of the standard Normal distribution.
\item Then for the guarantee in \eqref{eqn:accurateCondMeanSupX} to hold, it is sufficient that 
\begin{equation}\label{eqn:cltIneqSup}
\min_{w,l} n_{wl}\ \geq\ \parenth{\frac{\Phi^{-1}\parenth{(1-\alpha)^{ \frac{1}{KL}}}\times \sigma  }{\epsilon}}^2.
\end{equation}

\end{itemize}

\end{proposition}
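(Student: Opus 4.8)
The plan is to exploit that both $\hat{\mu}(x,w)$ and $\mu(x,w)$ depend on $x$ only through its leaf $\ell(x)$, so that on a leaf $l$ they reduce to the scalars $\hat{\mu}_{wl}:=\frac{1}{n_{wl}}\sum_{i\in\ical(w,l)}Y_i$ and $\mu_{wl}:=\EC{\ynp}{\wnp=w,\ell(\xnp)=l}$. Consequently $\sup_x\max_w\abs{\hat{\mu}(x,w)-\mu(x,w)}=\max_{(w,l)}\abs{\hat{\mu}_{wl}-\mu_{wl}}$ is a maximum over the finite collection of $KL$ deviations, while for a fixed leaf the inner $\max_w$ is a maximum over the $K$ arms in that leaf. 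The argument rests on two structural facts to record first: conditional on $\icaln$, the estimators $\{\hat{\mu}_{wl}\}$ are mutually independent because the index sets $\ical(w,l)$ partition $\scal$ into disjoint blocks of independent units; and the new point's leaf $\ell(\xnp)$ is independent of $\{\hat{\mu}_{wl}\}$ given $\icaln$, since $\xnp$ is drawn independently of the training data.

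For the random-test-point bound \eqref{eqn:accurateCondMeanRandomX}, I would first condition on the leaf via the law of total probability: writing $\pi_l=\prc{\ell(\xnp)=l}{\icaln}$, the target probability equals $\sum_{l}\pi_l\,\prc{\max_w\abs{\hat{\mu}_{wl}-\mu_{wl}}<\epsilon}{\icaln}$, using independence of $\ell(\xnp)$ from the estimators. Since $\sum_l\pi_l=1$, it suffices that each per-leaf factor be at least $1-\alpha$. Within leaf $l$, independence across arms factorizes this into $\prod_{w=1}^{K}\prc{\abs{\hat{\mu}_{wl}-\mu_{wl}}<\epsilon}{\icaln}$, so it is enough that every arm attain per-component coverage at least $(1-\alpha)^{1/K}$.

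The per-component step is a normal-tail calculation. By Assumption \ref{assump:clt}, $(\hat{\mu}_{wl}-\mu_{wl})/(\sigma_{wl}/\sqrt{n_{wl}})$ is standard normal given $\icaln$, so $\prc{\abs{\hat{\mu}_{wl}-\mu_{wl}}<\epsilon}{\icaln}$ depends on $n_{wl}$ only through $\epsilon\sqrt{n_{wl}}/\sigma_{wl}$; replacing $\sigma_{wl}$ by its known upper bound $\sigma$ from Assumption \ref{assump:boundedVariance} and using monotonicity of $\Phi$ can only shrink this coverage, reducing the requirement to a condition on $\epsilon\sqrt{n_{wl}}/\sigma$. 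Solving the resulting inequality for $n_{wl}$ and taking the worst arm and leaf yields exactly the threshold $\min_{w,l}n_{wl}\geq\left(\frac{\Phi^{-1}((1-\alpha)^{1/K})\,\sigma}{\epsilon}\right)^2$ of \eqref{eqn:cltIneq}, and multiplying the $K$ per-arm coverages recovers the $1-\alpha$ guarantee. The supremum bound \eqref{eqn:cltIneqSup} is identical in spirit, except that \eqref{eqn:accurateCondMeanSupX} forces simultaneous control of all $KL$ deviations rather than of the $K$ arms in a single random leaf; the product now runs over all $KL$ independent components, so the per-component coverage must be raised to $(1-\alpha)^{1/(KL)}$, which is precisely the substitution $K\mapsto KL$ in the critical value.

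The main obstacle, and the step I would treat most carefully, is the bookkeeping that turns the joint \emph{two-sided} event into a product of per-component coverages reaching the nominal $1-\alpha$: one must verify the independence claims above (disjointness of the blocks together with independence of $\xnp$) and track the absolute-deviation event through the standardization so that the \v{S}id\'ak-type exponent $1/K$ (resp. $1/(KL)$) and the critical value $\Phi^{-1}(\cdot)$ are matched correctly. Everything else — the leaf-conditioning mixture, the variance-bound monotonicity, and solving for $n_{wl}$ — is routine once this reduction is in place.
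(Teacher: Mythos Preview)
Your proposal is correct and follows essentially the same route as the paper: the paper's proof invokes its Lemma~\ref{lem:jointEstMeansClusters}, which in turn rests on Lemma~\ref{lem:jointEstMeans}, and these lemmas carry out exactly the reduction you describe—law of total probability over the leaf $\ell(\xnp)$ for \eqref{eqn:accurateCondMeanRandomX}, independence factorization over arms (resp.\ over all $KL$ cells for \eqref{eqn:accurateCondMeanSupX}), and a per-component normal tail bound with the variance replaced by $\sigma$. The only cosmetic difference is that the paper packages the argument into two standalone lemmas about independent group means while you work directly with the $\hat{\mu}_{wl}$.
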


Using Proposition \ref{prop:CLTBased}, Figure \ref{fig:sufficient} plots the sufficient sample size as
\[
n\ =\ K\times L\times\frac{ \parenth{\frac{\Phi^{-1}\parenth{(1-\alpha)^{ \frac{1}{K}}}\times \sigma  }{\epsilon}}^2 }{\abs{\scal}/n}
\]
with the constraint that we use half the sample for honest prediction, i.e. $\abs{\scal}/n=0.5$. We include the example choices $\sigma\in\curl{0.5,1}$, $\epsilon\in\curl{0.01,0.1}$, $L\in\curl{16,512}$, and varying confidence levels $(1-\alpha)\times100\%$.

In the appendix, we include Propositions \ref{prop:boundedOutcome} and \ref{prop:boundedOutcomeBoundedVar}. The former works under Assumption \ref{prop:boundedOutcome}, while the latter works under Assumptions \ref{prop:boundedOutcome} and \ref{prop:boundedOutcomeBoundedVar}. These correspond to a more conservative alternative to Assumption \ref{assump:clt} \citep{Rosenblum2009-ue_concInequalities}. % in case we think the Central Limit Theorem is not accurate enough.

The proof of Propositions \ref{prop:CLTBased}, \ref{prop:boundedOutcome}, and \ref{prop:boundedOutcomeBoundedVar} is found in Appendix \ref{append:proofMainResults}. It is an application of Lemma \ref{lem:jointEstMeansClusters}, which estimates the means of $K$ groups based on independent data, across $L$ disjoint clusters. Conditional on the partition $\icaln$, the setting of Propositions \ref{prop:CLTBased}, \ref{prop:boundedOutcome}, and \ref{prop:boundedOutcomeBoundedVar} fits this mould. The exact sufficient sample size per tree per treatment group in each guarantee is given by the normality assumption, Hoeffding's inequality, and Bennett's inequality, respectively \citep{wainwright_2019,bennet1962,Rosenblum2009-ue_concInequalities}. 

Moreover, the statements in Propositions \ref{prop:CLTBased}, \ref{prop:boundedOutcome}, and \ref{prop:boundedOutcomeBoundedVar} with respect to a random new input $\xnp$ do not assume anything about the discrete distribution of the random variable $\ell(\xnp)$. This discrete distribution can be spiked, slabbed, skewed, or anything else\footnote{Inserting prior knowledge on, and constraining the posterior distribution of, this distribution could be an interesting future direction.}. The minimum sample size guarantees the nominal coverage, regardless. At the stage of sample size planning, this guard seems ideal as we may not have a good intuition for how a learned partition will distribute the Multinoulli label $\ell(\xnp)$.

\subsubsection{Applying Propositions \ref{prop:CLTBased}, \ref{prop:boundedOutcome}, and \ref{prop:boundedOutcomeBoundedVar} with a standardized scale}
In Section \ref{sec:boundCondVar}, we discuss how to obtain the bound on the conditional variance, $\sigma^2$ to satisfy Assumption \ref{assump:boundedVariance}. In case we do not have enough domain knowledge to obtain $\sigma^2$, we now discuss the consequence of applying Propositions \ref{prop:CLTBased}, \ref{prop:boundedOutcome}, and \ref{prop:boundedOutcomeBoundedVar} and Lemma \ref{lem:keyIneqs} with the specification $\sigma^2=1$. Remark \ref{rem:stdScale} explains that this specification turns our inference into one with a margin of error defined by a factor $\epsilon$ times an amount relating to the true conditional standard deviations.

Consider the outcome standardization
\[
Z_{iwl}\ :=\ \frac{Y_i}{\sigma_{wl}};\ i=1,\dots,n+1.
\]
With $Z_{iwl}$, we have the following standardized oracle estimator
\[
\hat{\mu}_{*}(x,w)\ =\ \frac{1}{n_{w \ell(x) }}\sum_{ i\in\ical(w,\ell(x)) }Z_{iw\ell(x)}\ =\ \frac{ \hat{\mu}(x,w) }{\sigma_{w \ell(x)} }
\]
for
\[
\mu_{*}(x,w)\ =\ \EC{ Z_{(n+1)wl} }{ \wnp=w,\ell(\xnp)=\ell(x) } \ =\ \frac{ \mu(x,w) }{\sigma_{w\ell(x)} }.
\]
Noting that $\varc{Z_{iwl} }{ W_i=w,\ell(X_i)=l,\icaln }= 1$, we have the following result with respect to our actual quantity of interest. 

\begin{remark}[Working on a standardized scale\label{rem:stdScale}]\ \\

To determine the sufficient sample size, we can apply Propositions \ref{prop:CLTBased}, \ref{prop:boundedOutcome}, and \ref{prop:boundedOutcomeBoundedVar} with respect to $\hat{\mu}_{*}(w,\cdot)$ and $\sigma\ \dot{=}\ 1$. The practical interpretation thanks to Lemma \ref{lem:keyIneqs} turns into:
\begin{enumerate}
\item $\abs{\hat{\mu}(\xnp,w)-\mu(\xnp,w)}<\epsilon\sigma_{w\ell(\xnp) }$ jointly for $w=1,\dots,K$ with probability $1-\alpha$;
\item $\abs{\max_w\hat{\mu}(\xnp,w)-\max_w\mu(\xnp,w)}<\epsilon \max_w\sigma_{w\ell(\xnp) }$ with probability $1-\alpha$;
\item $\abs{\hat{\tau}(\xnp,w,w')-\tau(\xnp,w,w')}<\epsilon\parenth{  \sigma_{w\ell(\xnp)}+\sigma_{w'\ell(\xnp) } }$ for each $w,w'$ with probability $1-\alpha$
\end{enumerate}
provided that the pertinent sample size constraint in Propositions \ref{prop:CLTBased}, \ref{prop:boundedOutcome}, and \ref{prop:boundedOutcomeBoundedVar} holds with $\sigma=1$. We have the analogous statements with respect to the guarantees uniformly across $x$ .

\end{remark}

Figure \ref{fig:sufficient} with $\sigma=1$ depicts the sufficient sample sizes for Remark \ref{rem:stdScale} with the choice $\epsilon\in\curl{0.01,0.1}$.

\subsection{Learning a feature space partition}

The partition
\[
\xcal_1,\dots,\xcal_L\ \subset\ \xcal.
\]
can be given by a decision tree \citep{Breiman2017-ay,grf2019,policytree2021,poliTreeSofware}, uniform mass binning \citep{unifMassBinning2001,pmlr-v139-gupta21b}, or other space partitioning approach of interest. So long as the outcomes for prediction, $(Y_i;\ i\in\scal)$, stay independent of the learned partition split points conditionally on their treatment designation and co-variates, $((W_i,X_i);\ i\in\scal)$, our result is technically ``model-free'' among possible approaches to discretize $\xcal$. We illustrate an example partition of the feature space using a policy tree \citep{policytree2021,poliTreeSofware} in Section \ref{sec:empiricalEval}.

Of course, the exact partitioning approach will contribute to whether the treatment effect parameters are informative or not. Compared to a partition of the feature space that results from a least squares criterion for a single tree in a regression forest that predicts $Y_i$ with $X_i$, it may be more ideal to understand treatment effects with the feature space partition given by a decision tree in a causal forest \citep{grf2019} that uses the more relevant Robinson-Learner objective \citep{nieWager2020}.

% \subsection{Learning a forest of feature space partitions}

% \textcolor{blue}{Here we'd need a result that shows that if we have trees that satisfy a sample size constraint of Propositions \ref{prop:CLTBased}, \ref{prop:boundedOutcome}, and \ref{prop:boundedOutcomeBoundedVar}, then inference by  averaging out the ensemble predictions retains desired theoretical results. }

\subsection{Given a fixed sample size\label{sec:fixedSampleSize}}
Let \[
n_\scal = \abs{\scal}
\] 
be the fixed sample size corresponding to the honest prediction set $\scal$. Recall that
\[
\sum_{w,l}n_{wl} = n_\scal.
\]
For the sake of solving for tolerable values of $K,L,\alpha,$ and $\epsilon$, let us set:
\[
\min_{w,l} n_{wl} \dot{=} \frac{n_\scal}{K\times L}.
\]
To apply Propositions \ref{prop:CLTBased}, \ref{prop:boundedOutcome}, and \ref{prop:boundedOutcomeBoundedVar} to the case with a fixed sample size budget $n_\scal$, we can enforce an inequality of the form
\[
\frac{n_\scal}{K\times L}\ \geq\ g(K,L,\alpha,\epsilon,\sigma^2,\dots),
\]
where $g$ corresponds to the concentration inequality that is used to determine the sufficient sample size, such as the right hand side of the inequality in \eqref{eqn:cltIneq}. In order to solve for a tolerable value for one of $K$, $L$, $\alpha$, and $\epsilon$, we can fix the other terms. That is:
\begin{itemize}
\item{The maximum allowable number of treatment arms:}
\[
\arg\max\curl{K:\ \frac{n_\scal}{K\times L}\ \geq\ g(K,L,\alpha,\epsilon,\sigma^2,\dots)}.
\]
\item{The maximum allowable number of feature space subsets:}
\[
\arg\max\curl{L:\ \frac{n_\scal}{K\times L}\ \geq\ g(K,L,\alpha,\epsilon,\sigma^2,\dots)}.
\]
\item{The supremum allowable confidence:}
\[
\arg\sup\curl{1-\alpha:\ \frac{n_\scal}{K\times L}\ \geq\ g(K,L,\alpha,\epsilon,\sigma^2,\dots)}.
\]
\item{The infimum allowable margin of error:}
\[
\arg\inf\curl{\epsilon:\ \frac{n_\scal}{K\times L}\ \geq\ g(K,L,\alpha,\epsilon,\sigma^2,\dots)}.
\]
\end{itemize}

If the solutions to these are not analytically derivable, such as $\epsilon$ when $g$ is given by Bennet's inequality in Proposition \ref{prop:boundedOutcomeBoundedVar}, bisection or some other root-finding method can be used \citep{Rosenblum2009-ue_concInequalities}.

\subsection{Bounding the Outcome's conditional variance\label{sec:boundCondVar} }
Some warnings are in order, especially when attempting to understand the maximum allowable feature space subsets or maximum allowable number of treatment arms. In particular, special care must be taken with respect to the outcome variation bound
\[
\sigma^2\geq\max_{w,l}\var{\ynp\mid \wnp=w,\ell(\xnp)=l}.
\]
Due to the possibility of heteroscedasticty, $\var{\ynp\mid \wnp=w,\ell(\xnp)=l}$ may change non-trivially as $L$ and $K$ change. 
When the outcome is bounded almost surely--Assumption \ref{assump:boundedOutcome}, it can be shown:
\[
\max_{w,l}\var{\ynp\mid \wnp=w,\ell(\xnp)=l}\leq\frac{(b-a)^2}{4}.
\]
The upper bound is attained if $\ynp$ in treatment group $w$ in leaf $l$ is discrete at two unique values $a$ and $b$ with equal probability. 

% Another manner to specify $\sigma^2$ is with respect to what we believe might be worst-case proportion of variation in the potential outcome explained by the mapping $\ell(x)$. 
% \[\begin{aligned}
% &\min_w R^2_w\\ 
% \dot{=}\ &\min_w \curl{1-\frac{\E{\sigma_{w\ell(\xnp)} } }{ \varc{\ynp}{\wnp=w} } }&\geq\ &1-\frac{\sigma^2}{ \min_w \varc{\ynp}{\wnp=w} }\\
% \iff\ & \sigma^2&\ \geq\ & \min_w \varc{\ynp}{\wnp=w}\parenth{1-\min_w R^2_w}. 
% \end{aligned}\] 
% Due to the law of total variation, this population quantity denoted as $R^2_w$ is between $0$ and $1$, therefore allowing us to take $\sigma^2$ as a fraction of the worst case variance for bounded outcomes. That is, should $\ynp\in[a,b]$, we can decrease the previous bound $(b-a)^2/4$ according to how good we expect our prediction of the treatment outcomes to be.

A scenario to tighten the conservative conditional variance bound of $(b-a)^2/4$ can be if we have knowledge that, for some fixed outcome value $y^*$ and some small $p^*\in\sqbrack{0,1/2}$, we have:
\[
\max_{w,l}\prc{\ynp\neq y^*}{\wnp=w,\ell(\xnp)=l}\ \leq p^*.
\]
The rate $p^*$ can be approximated from domain knowledge in the case that an outcome rarely exhibits a change from a fixed value. For example, the best click-through rate among marketing ad interventions may realistically remain capped by a small value. In this case, we show in Appendix \ref{append:proofs} that:
\begin{equation}\begin{aligned}\label{eqn:nonZeroVarBound}
&\max_{w,l}\var{\ynp\mid \wnp=w,\ell(\xnp)=l}
\\\leq\  &p^*\frac{(b-a)^2}{4}+p^*(1-p^*)\max\curl{(a-y^*)^2,(b-y^*)^2}.
\end{aligned}\end{equation}

If $\ynp\in\{0,1\}$, the conditional variance upper bound can instead be tightened by noting:
\[\begin{aligned}
&\max_{w,l}\var{\ynp\mid \wnp=w,\ell(\xnp)=l }\\
=\ &\max_{w,l}\{\prc{\ynp=y^*}{ \wnp=w,\ell(\xnp)=l }\\
&\ \ \ \ \times \sqbrack{1-\prc{\ynp=y^*}{ \wnp=w,\ell(\xnp)=l }}\}\\
\leq\ &p^*(1-p^*),
\end{aligned}\]
since $g(t)=t(1-t)$ is monotone increasing for $t\in[0,1/2]$. Figure \ref{fig:sufficient} depicts the binary outcome case with the most conservative standard deviation, $\sigma=0.5$, corresponding to $p^*=0.5$.

% Consider the following alternative for specifying $\sigma^2$ in Remark \ref{rem:VarExplained}, which makes use of prior knowledge for how well $\ynp$ ought to be explained explained by $(\wnp,\ell(\xnp) )$. 

% \begin{remark}[Bound on minimum explainable outcome variance\label{rem:VarExplained}]\ \\
% Consider the following with respect to the proportion of the variance in $\ynp$ explained by $(\wnp,\ell(\xnp) )$.
% \[\begin{aligned}
% &1-\frac{ \E{\varc{\ynp}{\wnp,\ell(\xnp) }} }{ \var{\ynp} }\\
% \geq\ & 1-\frac{ \max_{w,l}\varc{ \ynp }{\wnp,\ell(\xnp)} }{ \var{\ynp} } \\
% \geq\ & 1- \sigma^2 / \sigma^2_M \ =:\ R^2_{\rm min}.\\
% \end{aligned}\]
% Here, $\sigma^2_m$ satisfies: \[
% \sigma^2_m\leq \var{\ynp},
% \]
% so $R^2_{\rm min}$ is a lower bound on the proportion of variance in $\ynp$ explained by $(\wnp,\ell(\xnp))$. Using knowledge of these two parameters, we may set 
% \[
% \sigma^2\ \dot{=}\ \parenth{1-R^2_{\rm min}}\sigma^2_m,
% \]
% which writes $\sigma^2$ as a fraction of the marginal variance lower bound when $R^2_{\rm min}\in[0,1]$.
% \end{remark}

Should we instead wish to specify the parameter $\sigma=1$, the sufficient sample size in Propositions \ref{prop:CLTBased}, \ref{prop:boundedOutcome}, and \ref{prop:boundedOutcomeBoundedVar} guarantees a deviation that is within a factor $\epsilon$ of the conditional standard deviation, as we explain in Remark \ref{rem:stdScale}.

\section{Emprical Evaluation\label{sec:empiricalEval}}

\begin{figure}
    % \centering
\begin{minipage}[b]{.475\textwidth}
\centering
\includegraphics[width=1\textwidth]{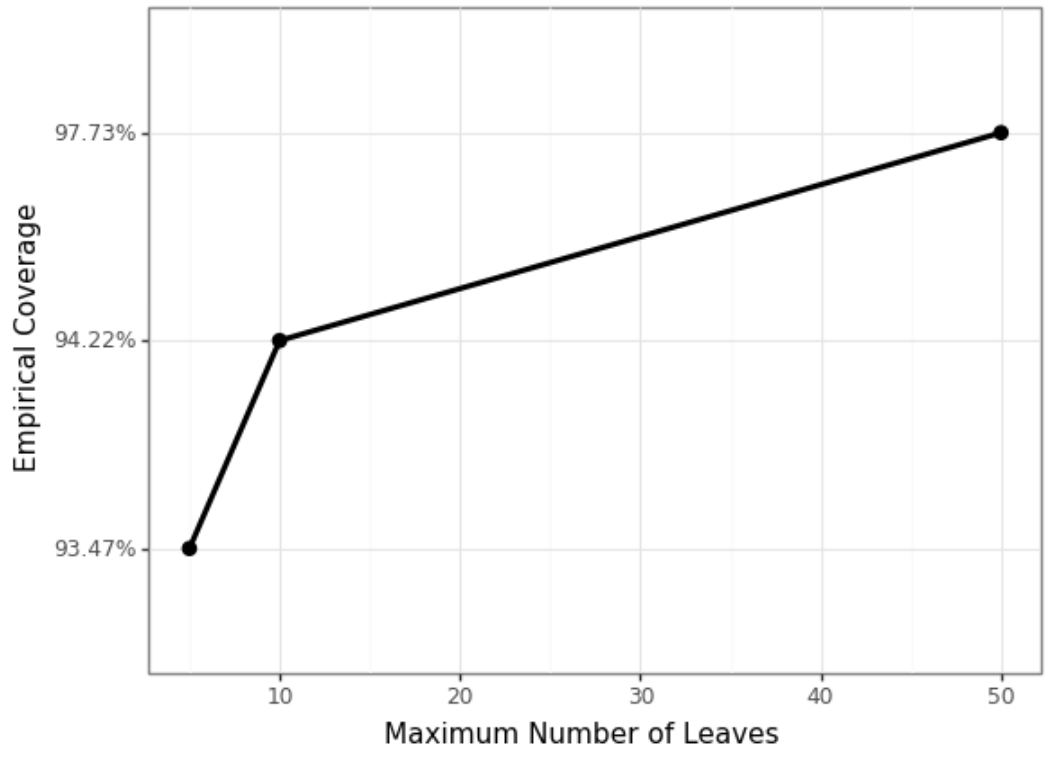}
\caption{Coverage across 500 replicates.\label{fig:simsEmpiricalCoverage}}
\end{minipage}
\hfill
\begin{minipage}[b]{.475\textwidth}
\centering
\includegraphics[width=1\textwidth]{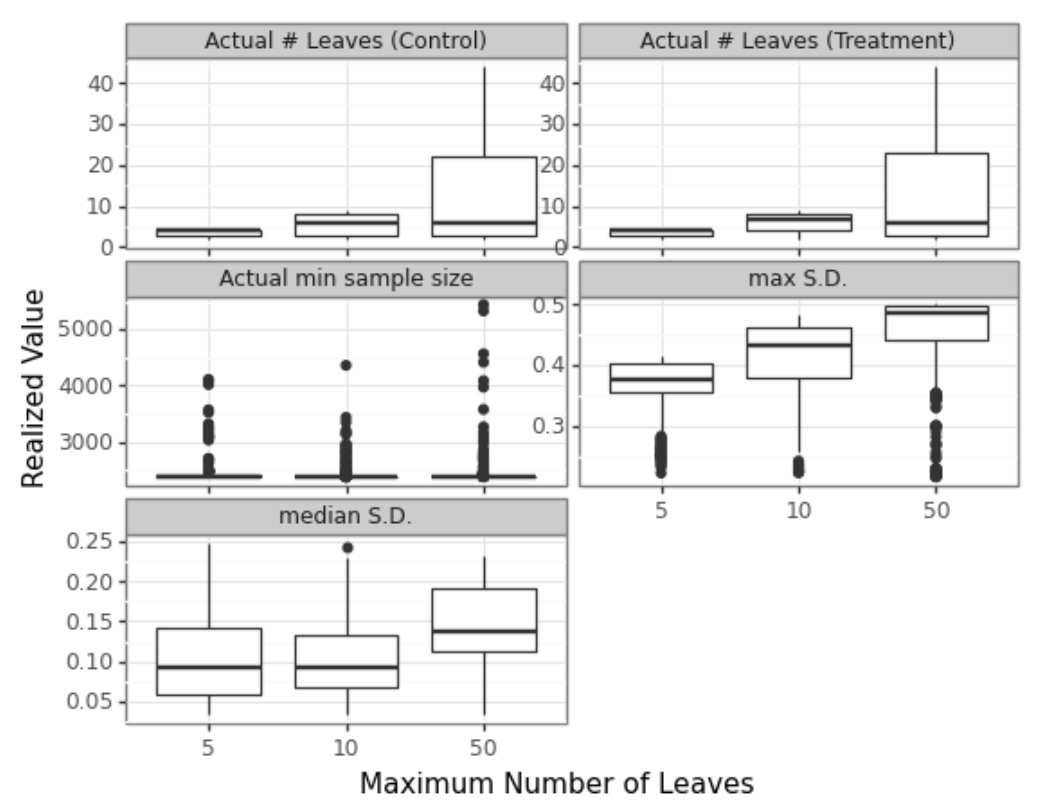}
\caption{Across 500 replicates, the realized decision tree parameters.\label{fig:simsRealizedParams}}
\end{minipage}
\end{figure}

% \subsection{Simulations}

% \textcolor{blue}{This module could help: \url{https://econml.azurewebsites.net/_autosummary/econml.policy.DRPolicyTree.html}. If not feasible to modify their code to get needed sample size constraint per leaf, per treatment, then a separate regression tree (forest) for each $\mu(x,w)$ could work as a proof of concept. Propositions \ref{prop:CLTBased}, \ref{prop:boundedOutcome}, and \ref{prop:boundedOutcomeBoundedVar} would need to be slightly re-worded.}

% \textcolor{blue}{For more details, see:
% \begin{itemize}
%     \item \url{https://github.com/microsoft/EconML/issues/739}.
%     \item \url{https://github.com/grf-labs/policytree/issues/161}.
% \end{itemize}}

% \subsection{Criteo AI Lab's Large Randomized Experiment}
% \textcolor{blue}{\citet{diemert:hal-02515860} discuss a large publicly available marketing a/b test data set. For example, an argument can be that good results guided by our write-up can be achieved without having used the entire population considered in this a/b test.}

We now present an application to Criteo AI Lab's uplift prediction benchmark dataset \citep{diemert:hal-02515860}. According to the web page that hosts the data:
\begin{quote}
This dataset is constructed by assembling data resulting from several incrementality tests, a particular randomized trial procedure where a random part of the population is prevented from being targeted by advertising. It consists of 25M rows, each one representing a user with [12] features, a treatment indicator and 2 labels (visits and conversions).
\end{quote}

The available down sampled data consists of 13,979,592 observations. The outcome we focus on is the $0/1$ indicator for a website visit.

In our semi-synthetic simulations, we use the Central Limit Theorem approach to designate a sample size that guarantees
\begin{equation}\begin{aligned}\label{eqn:guaranteeNeeded}
&\pr{ \max_w\abs{\frac{ \hat{\mu}(\xnp,w)-\mu(\xnp,w) }{\sigma_{w\ell(\xnp)} }}<\frac{1}{25}  }\\
% =\ &\pr{ \bigcap_w\curl{ \hat{\mu}(\xnp,w)-\frac{\sigma_{w\ell(\xnp)}}{25} <  \mu(\xnp,w) < \hat{\mu}(\xnp,w)+\frac{\sigma_{w\ell(\xnp)}}{25} }  } 
\end{aligned}\end{equation}
be at least $0.9$. For this to hold, it is sufficient that
\[
\min_{w,l}n_{wl}\geq 2374.
\]

We implement the sub-group counterfactual mean estimation using a \texttt{PolicyTree} \citep{policytree2021} from the \texttt{econml.policy} module in Python \citep{econml}, which learns a decision tree-based mapping, also known as a treatment policy,
\[
\hat{e}:\ \xcal\to\curl{1,\dots,L}\to\curl{1,\dots,K}
\]
maximizing the total rewards (website visits):
\[
\mathcal{L}(e)\ \dot{=}\ \sum_{i\in\scal}\sum_w \indic{{e}(X_i)=w}R_{iw}.
\]
For our case, we constrain the minimum number of samples in each leaf to be \texttt{min\_samples\_leaf=2374}. Here, $R_{iw}$ is the reward that can be obtained from record $i$ should we assign them to treatment group $w$. In practice, the various potential rewards (potential outcomes) are unknown, so we can impute each of these with our estimate of
\[
\EC{Y_i}{W_i=w,X_i}.
\]

Using the training set indices $\curl{1,\dots,n}\backslash\scal$, we fit a single regression function, or Single-learner as it has been referred to in the causal inference literature \citep{kunzelXLearner2019,nieWager2020}, using the histogram gradient boosting classifier in the \texttt{sklearn.ensemble} module by setting \texttt{min\_samples\_leaf=2374} \citep{scikit-learn}. After imputing the honest set values of $R_{iw}$ using this S-learner, we set the argument \texttt{honest} to be \texttt{False} in \texttt{PolicyTree} since our honest splitting was done manually; we leave the rest of the arguments to their default. Because the \texttt{PolicyTree} module does not allow us to constrain the treatment-group sample size per leaf, we enforce our required leaf constraint per treatment group by learning one \texttt{PolicyTree} separately for each treatment group.%; besides the minimum number of records per leaf, we use default options.

Given a specified maximum number of leaves ($L=5,10,50$) in a \texttt{PolicyTree}, and given the minimum sample size per treatment group per leaf of 2374, along with an honest set fraction of $\abs{\scal}/{n}=0.5$, we sample (with replacement)
\[
n = 2\times L\times 2374/0.5
\]
of the 13,979,592 records (half for the control group and half for treatment group). We train our model with this sample. We replicate this process 500 times.

To approximate the true parameters in \eqref{eqn:guaranteeNeeded}, we sample in each of the 500 replicates an additional $n_{\rm test}=2\times L\times 20,000$ records with which we calculate
\[
\hat{\mu}^{\rm test}(x,w):=\frac{1}{ \abs{ \ical^{\rm test}(w,\ell(x) } }\sum_{ j\in\ical^{\rm test}(w,\ell(x) } Y_{n+j}
\]
and
\[
\hat{\sigma}^{\rm test}_{w\ell(x)}:=\sqrt{\frac{1}{ \abs{ \ical^{\rm test}(w,\ell(x)) } }\sum_{ j\in\ical^{\rm test}(w,\ell(x)) } \parenth{Y_{n+j}-\hat{\mu}^{\rm test}(x,w) }^2 }.
\]
Here,
\[
\ical^{\rm test}(w,l):=\curl{j\in\curl{1,\dots,n_{\rm test}} :\ W_{n+j}=w,\ell( X_{n+j} )=l }.
\]
To test whether \eqref{eqn:guaranteeNeeded} holds, we take the average across the 500 replicates of the one-replicate empirical coverage at the $n_{test}$ test points:
\[
\frac{1}{n_{\rm test}}\sum_{j=1}^{n_{\rm test} }\indic{ \max_w\abs{\frac{ \hat{\mu}(X_{n+j},w)-\hat{\mu}^{\rm test}(X_{n+j},w) }{\hat{\sigma}^{\rm test}_{w\ell(X_{n+j} )} }}<\frac{1}{25}  }.
\]

Figure \ref{fig:simsEmpiricalCoverage} shows the results: we achieve greater than 90\% coverage across the replicates. Besides the minimum sample size per leaf and treatment group we specify, Figure \ref{fig:simsRealizedParams} explains our good performance by demonstrating the distribution of realized decision tree parameters:  number of leaves, minimum sample size per leaf and treatment group, and the max and median standard deviations per leaf and treatment group in the test set. For example, the number of leaves in a decision tree was not always the maximum of $L$ allowed by the sample size $n$ and lower bound on $\min_{w,l}n_{wl}$. Intuitively, this is because better splits were possible at the top of the decision trees compared to a split that evenly divided the records in $\scal$ to the left and right child nodes.

% \textcolor{orange}{
% Motivated in part by uniform mass-binning, we consider recursive binary splits for two scores $s_1:\xcal\to\mathbb{R}$ $s_2:\xcal\to\mathbb{R}$ that we use to define a partition of $\xcal$. Let $S_0=\xcal$. We define the set at step $r=1,\dots,\log_2(L)$ of the recursive partition as:
% \begin{itemize}
% \item 
% \end{itemize}
% }

\section{Discussion and Limitations\label{sec:discussion}}

With regard to recent advances in online experimentation and their anytime valid inference \citep{Maharaj_2023,kuchibhotla2021nearoptimal}, we think our results are helpful despite their application to fitting a model "offline" after data collection. Online experiments typically run until statistically significant differences across pair-wise mean comparisons are reached, or until some time budget is reached if no statistical significance. For the first case, at the time point where statistical significance is reached, we can see how complex a model can be learned using the discussion in Section \ref{sec:fixedSampleSize}. If the complexity is not satisfactory, more data can be collected. On the other hand, despite the possibility that pair-wise mean comparisons are not marginally significantly different when a time budget is reached, some pair-wise comparisons in learnable sub-groups may be significant if a large enough signal can be denoised by dilineating sub-groups with pre-treatment features. For this second case, we may similarly entertain plausible model complexities and even attempt to collect further data until a desired complexity is possible. 

Regarding the determination of a statistically significant difference between two treatment groups in a sub-group, the value $\epsilon$ in our sample size calculation gives an understanding that an absolute difference in means of $2\epsilon$ or larger, whether at the original scale or a standardized scale, is sufficiently large to rule out random chance at the $\alpha$-significance level. However, we note that our use of an upper bound on the conditional variances opens up the possiblity that smaller differences than $2\epsilon$ can be deemed statistically significant in practice. This is because the standard error for the mean comparison, 
\[
\sqrt{\var{\hat{\mu}_{wl}-\hat{\mu}_{w'l}\mid \ical }}=\sqrt{\sigma_{wl}^2/n_{wl}+\sigma_{w'l}^2/n_{w'l}}, %;\ w\neq w',l=1,\dots,L,
\]
can be much smaller when using the true conditional variances compared to the maximum bound $\sigma^2$. Related to this, each treatment group need not have the same number of units either. That is, an imbalanced design where there are more units in some treatment groups compared to others could help make smaller differences in a pair-wise comparison of means detectable, espcially if a greater sample is directed toward the treatment group(s) with more outcome variation. This is simply because the standard error for a pairwise mean comparison also decreases with an increase in the sample size used to calculate one of the two sample means, $\hat{\mu}_{wl}$ or $\hat{\mu}_{w'l}$, in question. 

While the empirical counterfactual mean in a sub-group may be close, in a frequentist sense, to its population analogue with a sufficient sample size, special care must be taken with respect to this parameter. Firstly, the sub-group counterfactual average may not be the best summary value for a treatment's performance: a random variable (e.g. difference in potential outcomes) need not be close to its (sub-group) average \citep{ruizPadilla2022Pibt}. Relatedly, special care must be taken with respect to hyper-parameter tuning and goodness of fit. Empirics when choosing a regression model, such as $R^2$ and AUC, are still important. Moreover, an ensemble of trees may more robustly summarize an experiment compared to one tree \citep{grf2019}. 

% Additionally, the use of machine learning models can introduce bias. While our inference is unbiased for meticulously defined sub-group parameters, de-biased inference is generally of interest when parameters such as the Average Treatment Effect (ATE) or Average Derivative are of interest after covariate adjustment.

In this paper, we address only internal validity of our inference (recall the stationarity condition of the new test point in Section \ref{sec:manageableInfTarget}). In practice, sensitivity analysis is important when reasoning about external validity of inference with an experimental sample. For example, see \citet{lei2023policy} for a discussion of this phenomenon along with a reformulation of the policy learning objective to account for it. Moreover, with respect to designing randomized trials with the aim of understanding treatment effect heterogeneity, \citet{tipton2023designing} study the mean squared prediction error (MSPE) of the \textit{random} individual treatment effect of an out of sample subject in a target population. Heterogeneity in \citet{tipton2023designing} is with respect to a linear specification and a low-dimensional reweighted least squares estimator accounting for how relatively likely we are to observe a unit in the sample's origin population compared to the target inference population. It would be interesting to join our and this approach for external validity under a nonlinear model specification including, but not limited to, a learnable feature space partition.

\section{Acknowledgements}
Gabriel would like to acknowledge Binjie Lai, Julia Viladomat, Vasco Yasenov, Oscar Madrid-Padilla, Alan Vasquez, David Arbour, Ritwik Sinha, and Avi Feller for their insightful conversations related to this manuscript. Gabriel would also like to acknowledge the financial support of Adobe Inc. during the realization of this work.

%Bibliography
\bibliographystyle{apalike}  
\bibliography{references}  
\appendix

\newpage

\section{Extended Main Results\label{append:extendedResults}}
\begin{proposition}[Main Result: Bounded Outcome\label{prop:boundedOutcome}]\ \\

Suppose Assumption \ref{assump:boundedOutcome} holds. Let $\alpha\in(0,1)$ and $\epsilon>0$.
\begin{itemize}
\item Then for \eqref{eqn:accurateCondMeanRandomX}
to hold, it is sufficient that
\[
\min_{ w,l }n_{ wl }\geq \frac{ \log\parenth{ \frac{2}{ 1-\parenth{1-\alpha }^{ \frac{1}{K} } } }\abs{b-a}^2 }{2\epsilon^2}.
\]
\item Then for \eqref{eqn:accurateCondMeanSupX}
to hold, it is sufficient that
\[
\min_{ w,l }n_{ wl }\geq \frac{ \log\parenth{ \frac{2}{ 1-\parenth{1-\alpha }^{ \frac{1}{KL} } } }\abs{b-a}^2 }{2\epsilon^2}.
\]
\end{itemize}
\end{proposition}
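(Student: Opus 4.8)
The plan is to reduce both guarantees to per-cell tail bounds on the sample means $\hat\mu_{wl}\ \dot{=}\ \frac{1}{n_{wl}}\sum_{i\in\ical(w,l)}Y_i$ and then control each with Hoeffding's inequality; conditional on $\icaln$, this setting is an instance of the $K$-group, $L$-cluster estimation problem handled by Lemma \ref{lem:jointEstMeansClusters}. The first fact I would record is that $\hat\mu(x,w)$ and $\mu(x,w)$ are constant in $x$ within each leaf: for every $x$ with $\ell(x)=l$ we have $\hat\mu(x,w)=\hat\mu_{wl}$ and $\mu(x,w)=\mu_{wl}$. Hence the random-point deviation $\max_w\abs{\hat\mu(\xnp,w)-\mu(\xnp,w)}$ equals $\max_w\abs{\hat\mu_{wl}-\mu_{wl}}$ on the event $\curl{\ell(\xnp)=l}$, while the uniform deviation $\sup_x\max_w\abs{\hat\mu(x,w)-\mu(x,w)}$ equals $\max_{w,l}\abs{\hat\mu_{wl}-\mu_{wl}}$. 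This collapses both suprema over the continuum $\xcal$ into finite maxima over the cells of the partition.

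Next I would apply Hoeffding's inequality cell by cell. Conditional on $\icaln$, each $\hat\mu_{wl}$ is an average of $n_{wl}$ independent outcomes valued in $[a,b]$ under Assumption \ref{assump:boundedOutcome}, so
\[
\prc{\abs{\hat\mu_{wl}-\mu_{wl}}\geq\epsilon}{\icaln}\ \leq\ 2\exp\parenth{-\frac{2n_{wl}\epsilon^2}{\abs{b-a}^2}}.
\]
The combination step rests on one structural fact: conditional on $\icaln$, the means $\curl{\hat\mu_{wl}}$ are mutually independent, since they are built from the disjoint index sets $\ical(w,l)$. For the random-point guarantee \eqref{eqn:accurateCondMeanRandomX} I would condition on $\ell(\xnp)=l$ via the law of total probability; because $\xnp$ is independent of the honest-set outcomes given $\icaln$, the inner probability reduces to $\prc{\max_w\abs{\hat\mu_{wl}-\mu_{wl}}<\epsilon}{\icaln}$, which factorizes over the $K$ arms within leaf $l$. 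Requiring each factor to be at least $(1-\alpha)^{1/K}$, i.e. each tail at most $1-(1-\alpha)^{1/K}$, forces the product to be at least $1-\alpha$ in every leaf, and therefore after averaging over the (arbitrary) distribution of $\ell(\xnp)$. For the uniform guarantee \eqref{eqn:accurateCondMeanSupX}, the same factorization runs over all $KL$ cells, so I would instead demand each tail at most $1-(1-\alpha)^{1/KL}$.

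Finally I would invert the Hoeffding bound: solving $2\exp\parenth{-2n_{wl}\epsilon^2/\abs{b-a}^2}\leq 1-(1-\alpha)^{1/K}$ for $n_{wl}$ gives $n_{wl}\geq\abs{b-a}^2\log\parenth{2/(1-(1-\alpha)^{1/K})}/(2\epsilon^2)$, and likewise with $K$ replaced by $KL$ for the uniform statement; taking the minimum over $(w,l)$ and enforcing it on $\min_{w,l}n_{wl}$ yields the two stated thresholds. Note that the variance assumptions play no role here, since Hoeffding uses only the range $[a,b]$.

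The main obstacle is not the arithmetic but justifying the two probabilistic reductions cleanly. First is the conditional mutual independence of the $\hat\mu_{wl}$ given $\icaln$, which is precisely what lets me use the sharp exponents $1/K$ and $1/KL$ through a product rather than settling for a looser union bound of the form $\alpha/K$. Second is the interchange in the random-point case: I must argue carefully that conditioning on $\ell(\xnp)=l$ leaves the conditional law of the honest-set sample means unchanged, so the per-leaf coverage bound passes through the law of total probability uniformly, with no dependence on how $\ell(\xnp)$ distributes its mass across leaves.
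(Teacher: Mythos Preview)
Your proposal is correct and follows essentially the same route as the paper: reduce to per-cell Hoeffding bounds, use conditional independence of the $\hat\mu_{wl}$ given $\icaln$ to factorize the joint coverage into a product, handle the random-point case via the law of total probability over $\ell(\xnp)$ and the uniform case by treating all $KL$ cells simultaneously. This is precisely the content of Lemma~\ref{lem:jointEstMeansClusters} (built on Lemma~\ref{lem:jointEstMeans}) that the paper invokes, so your argument is effectively that lemma unpacked in context.
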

\begin{proposition}[Main Result: Bounded Outcome and Bound on its Conditional Variance\label{prop:boundedOutcomeBoundedVar}]\ \\

Suppose Assumptions \ref{assump:boundedOutcome} and \ref{assump:boundedVariance} hold. 

%%%%%%%
\begin{itemize}
\item Then for the guarantee in \eqref{eqn:accurateCondMeanRandomX} to hold, it is sufficient that
\[\begin{aligned}
&\min_{ w,l } n_{ wl }&\geq\ & \frac{ \log\parenth{ \frac{2}{ 1-(1-\alpha)^{\frac{1}{K}} } }\max\{a^2,b^2\} }{ \sigma^2 \curl{ \parenth{1+s }\log  \parenth{1+s } -s } }.\\
\end{aligned}\]

\item Then for the guarantee in \eqref{eqn:accurateCondMeanSupX} to hold, it is sufficient that
\[\begin{aligned}
&\min_{ w,l } n_{ wl }&\geq\ & \frac{ \log\parenth{ \frac{2}{ 1-(1-\alpha)^{\frac{1}{KL}} } }\max\{a^2,b^2\} }{ \sigma^2 \curl{ \parenth{1+s }\log  \parenth{1+s } -s } }.\\
\end{aligned}\]

Here,
\[
s := \frac{ \epsilon\max\{|a|,|b|\} }{\sigma^2}.
\]

\end{itemize}
% \textcolor{blue}{[TODO: Add analogous results for (i) unbounded sub-Gaussian random variables; (ii) an invocation of C.L.T.; (iii) Bernstein's inequality; (iv) DKWM inequality; (v) online ``anytime valid'' concentration inequalities.]}
\end{proposition}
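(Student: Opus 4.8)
The plan is to establish the sufficient condition for the premise \eqref{eqn:accurateCondMeanRandomX} (and then \eqref{eqn:accurateCondMeanSupX}) directly; all the downstream guarantees then follow from Lemma \ref{lem:keyIneqs}. Throughout I would condition on the honest partition $\icaln$, which fixes the index set $\ical(w,l)$ of each (treatment, leaf) cell while leaving the honest-set outcomes random. Since $\hat{\mu}(x,w)$ and $\mu(x,w)$ depend on $x$ only through the leaf $\ell(x)$, and within cell $(w,l)$ the estimator $\hat{\mu}(\cdot,w)$ is an average of the $n_{wl}$ independent bounded outcomes indexed by $\ical(w,l)$, the situation is exactly the ``$K$ means across $L$ disjoint clusters'' template of Lemma \ref{lem:jointEstMeansClusters}. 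The decisive structural fact is that the cells use disjoint subsets of $\scal$, so conditionally on $\icaln$ the $KL$ cell estimators are mutually independent; this is what lets me convert per-cell coverage into global coverage by multiplication rather than by a union bound, and is the origin of the exponents $1/K$ and $1/(KL)$.

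For the random-point guarantee \eqref{eqn:accurateCondMeanRandomX} I would first apply the law of total probability over the random leaf $\ell(\xnp)$ (using that the test point is independent of the honest outcomes, so conditioning on the leaf does not alter the estimator's law given $\icaln$):
\[
\prc{\max_w\abs{\hat{\mu}(\xnp,w)-\mu(\xnp,w)}<\epsilon}{\icaln}=\sum_{l}\prc{\ell(\xnp)=l}{\icaln}\,p_l,
\]
where $p_l$ denotes the within-leaf coverage. Because the right-hand side is a convex combination of the $p_l$, it suffices to show $p_l\geq 1-\alpha$ for \emph{every} $l$, which is precisely why the distribution of $\ell(\xnp)$ plays no role. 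Within a fixed leaf, the $K$ estimators $\hat{\mu}(\cdot,1),\dots,\hat{\mu}(\cdot,K)$ are independent given $\icaln$, so
\[
p_l=\prod_{w=1}^{K}\prc{\abs{\hat{\mu}(\cdot,w)-\mu(\cdot,w)}<\epsilon}{\ell(\xnp)=l,\icaln},
\]
and $p_l\geq 1-\alpha$ follows as soon as each factor is at least $(1-\alpha)^{1/K}$, i.e. each two-sided tail is at most $\delta:=1-(1-\alpha)^{1/K}$.

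The remaining step is to bound each cell's tail. Here I would invoke Bennett's inequality \citep{bennet1962,Rosenblum2009-ue_concInequalities} with the almost-sure bound $\abs{Y_i}\leq M:=\max\curl{\abs{a},\abs{b}}$ from Assumption \ref{assump:boundedOutcome} and variance $\sigma_{wl}^2$, giving
\[
\prc{\abs{\hat{\mu}(\cdot,w)-\mu(\cdot,w)}\geq\epsilon}{\icaln}\leq 2\exp\parenth{-\frac{n_{wl}\,\sigma_{wl}^2}{M^2}\,h\parenth{\frac{M\epsilon}{\sigma_{wl}^2}}},
\]
with $h(u)=(1+u)\log(1+u)-u$. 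To replace the unknown $\sigma_{wl}^2$ by the known bound $\sigma^2$ of Assumption \ref{assump:boundedVariance}, I would show the exponent is monotone decreasing in the variance: writing $f(v)=v\,h(M\epsilon/v)=M\epsilon\,h(u)/u$ with $u=M\epsilon/v$, the ratio $h(u)/u$ is increasing on $(0,\infty)$, so $f$ decreases in $v$ and the tail is worst at $\sigma_{wl}^2=\sigma^2$. Substituting $\sigma^2$ and $s=M\epsilon/\sigma^2$, requiring $2\exp(-n_{wl}\sigma^2 h(s)/M^2)\leq\delta$, and solving for $n_{wl}$ yields the stated bound; since it is enforced by $\min_{w,l}n_{wl}\geq(\cdots)$, every cell tail is controlled and the guarantee follows.

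For the uniform guarantee \eqref{eqn:accurateCondMeanSupX} the only change is that $\sup_x\max_w\abs{\hat{\mu}(x,w)-\mu(x,w)}$ collapses to a maximum over all $KL$ cells, since everything depends on $x$ through $\ell(x)$ alone; I would then use mutual independence of all $KL$ cells to require per-cell coverage $(1-\alpha)^{1/(KL)}$, equivalently tail $\delta=1-(1-\alpha)^{1/(KL)}$, and repeat the Bennett step verbatim. I expect the main obstacle to be the monotonicity-in-variance argument needed to pass from $\sigma_{wl}^2$ to the bound $\sigma^2$: because the variance enters both as the prefactor and inside the nonlinear $h(\cdot)$, one cannot simply plug in the larger value, and the claim reduces to verifying that $u\mapsto h(u)/u$ is increasing (equivalently that $v\mapsto v\,h(M\epsilon/v)$ is decreasing), which I would confirm using $h'(u)=\log(1+u)$ together with $\log(1+u)\leq u$.
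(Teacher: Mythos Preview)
Your proposal is correct and follows essentially the same route as the paper: the paper packages the argument via Lemma \ref{lem:jointEstMeansClusters} (which in turn invokes Lemma \ref{lem:jointEstMeans}), but the content is identical---law of total probability over the random leaf, independence of the $K$ (respectively $KL$) cell means conditional on $\icaln$, and a per-cell Bennett bound solved for $n_{wl}$. The monotonicity-in-variance step you flag as the main obstacle is simply absorbed by the paper into its citation of Bennett's inequality with a variance upper bound, so it is not a different approach, just a more explicit treatment of the same point.
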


\section{Proofs\label{append:proofs}}

\subsection{Proof of Lemma \ref{lem:keyIneqs}}
\begin{proof}[Proof of Lemma \ref{lem:keyIneqs}]\ \\

\textbf{Guarantees For a Random $\xnp\indep\curl{ X_1,\dots,X_n}$:}\\
With respect to the part of Lemma \ref{lem:keyIneqs} that provides a guarantee for random $\xnp$, consider the event:
\[\begin{aligned}
&\ecal(\xnp)&=\ &\curl{ \max_w \abs{ \hat{\mu}(\xnp,w)-\mu(\xnp,w) }<\epsilon  }.\\
\end{aligned}\]

Recall our assumption that $\prc{\ecal(\xnp)}{\icaln}\geq 1-\alpha$.

\begin{itemize}
\item\textbf{Best Conditional Counterfactual Mean:}\\
Consider that:
\[\begin{aligned}
&\prc{\abs{ \max_w\hat{\mu}(\xnp,w)-\max_w\mu(\xnp,w)  }\geq \epsilon}{\icaln}\\
=\ &\prc{\abs{ \max_w\hat{\mu}(\xnp,w)-\max_w\mu(\xnp,w) }\geq \epsilon}{\ecal(\xnp),\icaln}\prc{\ecal(\xnp)}{\icaln}\\
&+\prc{\abs{ \max_w\hat{\mu}(\xnp,w)-\max_w\mu(\xnp,w) }\geq \epsilon}{\ecal(\xnp)^C,\icaln}\prc{\ecal(\xnp)^C}{\icaln}\\
\leq\ &\prc{\ecal(\xnp)^C}{\icaln}\\
\leq\ &\alpha.
\end{aligned}\]

From this, it follows that
\[
\prc{\abs{ \max_w\hat{\mu}(\xnp,w)-\max_w\mu(\xnp,w)  }<\epsilon}{\icaln}\geq 1-\alpha
\]
as desired.

We used that
\[
\prc{\abs{ \max_w\hat{\mu}(\xnp,w)-\max_w\mu(\xnp,w) }\geq \epsilon}{ \ecal(\xnp),\icaln}=0.
\]
This is because $\ecal(\xnp)$ and triangle inequality imply that with probability $1$,
\[\begin{aligned}
&\abs{ \max_w\hat{\mu}(\xnp,w)-\max_w\mu(\xnp,w) }\\
=\ &\max\curl{ \max_w\hat{\mu}(\xnp,w)-\max_w\mu(\xnp,w),\ \max_w\mu(\xnp,w)-\max_w\hat{\mu}(\xnp,w) }\\
<\ &\max\curl{ \max_w\sqbrack{\mu(\xnp,w)+\epsilon}-\max_w\mu(\xnp,w),\ \max_w\sqbrack{\hat{\mu}(\xnp,w)+\epsilon}-\max_w\hat{\mu}(\xnp,w) }\\
=\ &\epsilon.
\end{aligned}\]
Moreover, we used that 
\[
\prc{\abs{ \max_w\hat{\mu}(\xnp,w)-\max_w\mu(\xnp,w) }\geq \epsilon}{\ecal(\xnp)^C,\icaln}\leq 1.
\]

\item\textbf{Treatment Effect Function:} \\
Consider next that:
\[\begin{aligned}
&\prc{\max_{ w\neq w' }\abs{ \hat{\tau}( \xnp,w,w' )-\tau(\xnp,w,w') }\geq 2\epsilon}{\icaln}\\
=\ &\prc{\max_{ w\neq w' }\abs{ \hat{\tau}( \xnp,w,w' )-\tau(\xnp,w,w') }\geq 2\epsilon}{\ecal(\xnp),\icaln}\prc{\ecal(\xnp)}{\icaln}\\
&+\prc{\max_{ w\neq w' }\abs{ \hat{\tau}( \xnp,w,w' )-\tau(\xnp,w,w') }\geq 2\epsilon}{\ecal(\xnp)^C,\icaln}\prc{\ecal(\xnp)^C}{\icaln}\\
\leq\ &\prc{\ecal(\xnp)^C}{\icaln}\\
\leq\ &\alpha.
\end{aligned}\]

From this, it follows that
\[
\prc{\max_{ w\neq w' }\abs{ \hat{\tau}( \xnp,w,w' )-\tau(\xnp,w,w') }< 2\epsilon}{\icaln}\geq 1-\alpha
\]
as desired.

We used that
\[
\prc{ \max_{ w\neq w' }\abs{ \hat{\tau}( \xnp,w,w' )-\tau(\xnp,w,w') }\geq 2\epsilon}{ \ecal(\xnp),\icaln}=0.
\]
This is because $\ecal(\xnp)$ and triangle inequality imply that with probability $1$,
\[
\max_{ w\neq w' }\abs{ \hat{\tau}( \xnp,w,w' )-\tau(\xnp,w,w') }\leq 2 \max_{w}\abs{ \hat{\mu}(\xnp,w)-\mu(\xnp,w) }<2\epsilon.
\]
Moreover, we used that 
\[
\prc{\max_{ w\neq w' }\abs{ \hat{\tau}( \xnp,w,w' )-\tau(\xnp,w,w') }\geq 2\epsilon}{\ecal(\xnp)^C,\icaln}\leq 1.
\]

\end{itemize}

 %%%%%

\textbf{Guarantees Uniformly Across $x\in\xcal$:}\\
With respect to the part of Lemma \ref{lem:keyIneqs} that provides a guarantee uniformly across $x$, consider the event:
\[\begin{aligned}
&\ecal&=\ &\curl{ \sup_x\max_{w}\abs{ \hat{\mu}(x,w)-\mu(x,w) }<\epsilon  }.\\
\end{aligned}\]

Recall our assumption that $\prc{\ecal}{\icaln}\geq 1-\alpha$.

\begin{itemize}
\item\textbf{Best Conditional Counterfactual Mean:}\\
Consider that:
\[\begin{aligned}
&\prc{\sup_x\abs{ \max_w\hat{\mu}(x,w)-\max_w\mu(x,w)  }\geq \epsilon}{\icaln}\\
=\ &\prc{\sup_x\abs{ \max_w\hat{\mu}(x,w)-\max_w\mu(x,w) }\geq \epsilon}{\ecal,\icaln}\prc{\ecal}{\icaln}\\
&+\prc{\sup_x \abs{ \max_w\hat{\mu}(x,w)-\max_w\mu(x,w) }\geq \epsilon}{\ecal^C,\icaln}\prc{\ecal^C}{\icaln}\\
\leq\ &\prc{\ecal^C}{\icaln}\\
\leq\ &\alpha.
\end{aligned}\]

From this, it follows that
\[
\prc{\sup_x\abs{ \max_w\hat{\mu}(x,w)-\max_w\mu(x,w)  }<\epsilon}{\icaln}\geq 1-\alpha
\]
as desired.

We used that
\[
\prc{\sup_x\abs{ \max_w\hat{\mu}(x,w)-\max_w\mu(x,w) }\geq \epsilon}{ \ecal,\icaln}=0.
\]
This is because $\ecal$ and triangle inequality imply that with probability $1$,
\[\begin{aligned}
&\sup_x\abs{ \max_w\hat{\mu}(x,w)-\max_w\mu(x,w) }\\
=\ &\sup_x\max\curl{ \max_w\hat{\mu}(x,w)-\max_w\mu(x,w),\ \max_w\mu(x,w)-\max_w\hat{\mu}(x,w) }\\
<\ &\sup_x\max\curl{ \max_w\sqbrack{\mu(x,w)+\epsilon}-\max_w\mu(x,w),\ \max_w\sqbrack{\hat{\mu}(x,w)+\epsilon}-\max_w\hat{\mu}(x,w) }\\
=\ &\epsilon.
\end{aligned}\]
Moreover, we used that 
\[
\prc{\sup_x \abs{ \max_w\hat{\mu}(x,w)-\max_w\mu(x,w) }\geq \epsilon}{\ecal^C,\icaln}\leq 1.
\]

\item\textbf{Treatment Effect Function:} \\
Consider next that:
\[\begin{aligned}
&\prc{\sup_x\max_{ w\neq w' }\abs{ \hat{\tau}( x,w,w' )-\tau(x,w,w') }\geq 2\epsilon}{\icaln}\\
=\ &\prc{\sup_x\max_{ w\neq w' }\abs{ \hat{\tau}( x,w,w' )-\tau(x,w,w') }\geq 2\epsilon}{\ecal,\icaln}\prc{\ecal}{\icaln}\\
&+\prc{\sup_x\max_{ w\neq w' }\abs{ \hat{\tau}( x,w,w' )-\tau(x,w,w') }\geq 2\epsilon}{\ecal^C,\icaln}\prc{\ecal^C}{\icaln}\\
\leq\ &\prc{\ecal^C}{\icaln}\\
\leq\ &\alpha.
\end{aligned}\]
From this, it follows that
\[
\prc{\sup_x\max_{ w\neq w' }\abs{ \hat{\tau}( x,w,w' )-\tau(x,w,w') }< 2\epsilon}{\icaln}\geq 1-\alpha
\]
as desired.
We used that
\[
\prc{\sup_x\max_{ w\neq w' }\abs{ \hat{\tau}( x,w,w' )-\tau(x,w,w') }\geq 2\epsilon}{ \ecal,\icaln}=0.
\]
This is because $\ecal$ and triangle inequality imply that with probability $1$,
\[
\sup_x\max_{ w\neq w' }\abs{ \hat{\tau}( x,w,w' )-\tau(x,w,w') }\leq 2\sup_x \max_{w}\abs{ \hat{\mu}(x,w)-\mu(x,w) }<2\epsilon.
\]
Moreover, we used that 
\[
\prc{\sup_x\max_{ w\neq w' }\abs{ \hat{\tau}( x,w,w' )-\tau(x,w,w') }\geq 2\epsilon}{\ecal^C,\icaln}\leq 1.
\]

\end{itemize}

\end{proof}

\subsection{Proof of Propositions \ref{prop:CLTBased}, \ref{prop:boundedOutcome}, and \ref{prop:boundedOutcomeBoundedVar}\label{append:proofMainResults}}

% \begin{proof}[Proof of Proposition \ref{prop:boundedOutcome}]\ \\
Consider first that we have $M=K\times L$ groups we are taking the mean of. Each observation is independent of any others, and within each group defined by the treatment group and feature space subset, they are iid according to 
\[
Y_i|\ell(X_i)=l,W_i=w; l=1,\dots,L,\ w=1,\dots,K.
\]
Thus, we can apply the pertinent result in Lemma \ref{lem:jointEstMeansClusters}.
% \end{proof}

\subsection{Proof of Equation \eqref{eqn:nonZeroVarBound}}
\begin{proof}[Proof of Equation \eqref{eqn:nonZeroVarBound}]\ \\
We have:

\[\begin{aligned}
&\var{\ynp\mid \wnp=w,\ell(\xnp)=l}\\
=\ &\var{\ynp-y^*\mid \wnp=w,\ell(\xnp)=l}\\
=\ &\prc{\ynp\neq y^*}{\wnp=w,\ell(\xnp)=l}\EC{(\ynp-y^*)^2}{\wnp=w,\ell(\xnp)=l,\ynp\neq y^*}\\
&-\parenth{\prc{\ynp\neq y^*}{\wnp=w,\ell(\xnp)=l}\EC{\ynp-y^*}{\wnp=w,\ell(\xnp)=l,\ynp\neq y^*}}^2\\
=\ &\prc{\ynp\neq y^*}{\wnp=w,\ell(\xnp)=l}\var{\ynp-y^*\mid \wnp=w,\ell(\xnp)=l,\ynp\neq y^*}\\
&+\prc{\ynp\neq y^*}{\wnp=w,\ell(\xnp)=l}\sqbrack{1-\prc{\ynp\neq y^*}{\wnp=w,\ell(\xnp)=l}}\\
&\times\parenth{\EC{\ynp-y^*}{\wnp=w,\ell(\xnp)=l,\ynp\neq y^*}}^2\\
\leq\ &p^*\frac{(b-a)^2}{4}+p^*(1-p^*)\max\curl{(a-y^*)^2,(b-y^*)^2}
\end{aligned}\]

In the first equality, we use that variance does not change when adding a constant. In the second equality, we use that $\var{U}=\E{U^2}-(\E{U})^2$, along with law of total expectation for these two terms. The third equality uses that $\E{U^2}=\var{U}+(\E{U})^2$ and combines terms. The inequality at the end uses that the function $g(t)=t(1-t)$ is monotone increasing on the interval $t\in[0,1/2]$. We also use the assumption that
\[
\max_{w,l}\prc{\ynp\neq y^*}{\wnp=w,\ell(\xnp)=l}\leq p^*\leq \frac{1}{2},
\]
along with the fact that $\ynp-y^*\in[a-y^*,b-y^*]$.

\end{proof}
\subsection{Additional Lemmas}

\begin{lemma}[Joint estimation of independent means\label{lem:jointEstMeans}]\ \\

Consider a sample of $n_j$ iid random variables $Y_{1j},\dots,Y_{n_jj}$, across $j=1,\dots,K$. 

Let $\epsilon\in(0,1)$ and $\alpha\in(0,1)$. Denote
\[
\mu_j= \E{Y_{ij}}.
\] 
\begin{itemize}
\item Suppose that 
\[
\pr{a\leq Y_{ij}\leq b }=1.
\] Then for
\begin{equation}\label{eqn:jointMeanEst}
\pr{ \max_{j}\abs{ \frac{1}{n_j}\sum_{i=1}^{n_j}Y_{ij}-\mu_j }<\epsilon }\geq1-\alpha,
\end{equation}
it is sufficient that
\[
\min_j n_j\geq \frac{ \log\parenth{ \frac{2}{ 1-(1-\alpha)^{\frac{1}{K}} } }\abs{b-a}^2 }{2\epsilon^2}.
\]

\item In addition to
\[
\pr{a\leq Y_{ij}\leq b }=1,
\]
suppose further that
\[
\var{Y_{ij} }\leq \sigma^2\ \text{for each }j=1,\dots,K.
\]
Then for Equation \eqref{eqn:jointMeanEst}, it is sufficient that
\[\begin{aligned}
&\min_j n_j&\geq\ & \frac{ \log\parenth{ \frac{2}{ 1-(1-\alpha)^{\frac{1}{K}} } }\max\{a^2,b^2\} }{ \sigma^2 \curl{ \parenth{1+s }\log  \parenth{1+s } -s } },\\
\end{aligned}\]

where \[
s = \frac{ \epsilon\max\{|a|,|b|\} }{\sigma^2}.
\]

\item Suppose instead that the Central Limit Theorem (CLT) holds:
\[
\hat{\mu}_{j}\sim\mathcal{N}\parenth{\mu_j,\frac{\var{Y_{ij}}}{n_j} };\ j=1,\dots,K.
\]
Then for Equation \eqref{eqn:jointMeanEst}, it is sufficient that
\[
\min_j n_j\ \geq\ \parenth{\frac{z_{\alpha_0/2}\times \max_j\sigma_j}{\epsilon}}^2.
\]
Here, $\Phi$ denotes the standard Normal CDF, while $z_{\alpha_0/2}$ is the $(1-\alpha_0/2)^{\rm th}$ quantile of the standard Normal distribution. Meanwhile,
\[
\alpha_0\ \dot{=}\ 1-(1-\alpha)^{\frac{1}{K}}, 
\]

\end{itemize}
\begin{proof}\ \\
\begin{itemize}
    \item For the first case, we have
\[\begin{aligned}
&\pr{ \max_{j}\abs{ \frac{1}{n_j}\sum_{i=1}^{n_j}Y_{ij}-\mu_j }<\epsilon }&\\
=\ & \prod_{j=1}^K \pr{ \abs{ \frac{1}{n_j}\sum_{i=1}^{n_j}Y_{ij}-\mu_j }<\epsilon }\\
\geq\ & \prod_{j=1}^K \parenth{ 1-\exp\curl{-\frac{2n_j\epsilon^2}{ \abs{b-a}^2 } } } \\
\geq\ &1- \alpha. \\
\end{aligned}\]
The first equality is due to independence. The next inequality is due to Hoeffding's inequality \citep{wainwright_2019}. For the last inequality, it is sufficient that
\[
\min_j n_j\geq \frac{ \log\parenth{ \frac{2}{ 1-(1-\alpha)^{\frac{1}{K}} } }\abs{b-a}^2 }{2\epsilon^2}.
\]

\item For the second case, we use Bennett's inequality \citep{bennet1962,Rosenblum2009-ue_concInequalities} with the premise that $\max_j\var{Y_{ij}}\leq\sigma^2$. We get:
\[\begin{aligned}
&\pr{ \max_{j}\abs{ \frac{1}{n_j}\sum_{i=1}^{n_j}Y_{ij}-\mu_j }<\epsilon }&\\
=\ & \prod_{j=1}^K \pr{ \abs{ \frac{1}{n_j}\sum_{i=1}^{n_j}Y_{ij}-\mu_j }<\epsilon }\\
\geq\ & \prod_{j=1}^K \parenth{ 1-\exp\curl{-\frac{n_j \sigma^2 }{ \max\{a^2,b^2\} }\curl{ \parenth{1+\frac{ \epsilon\max\{|a|,|b|\} }{\sigma^2} }\log  \parenth{1+\frac{ \epsilon\max\{|a|,|b|\} }{\sigma^2} } -\frac{ \epsilon\max\{|a|,|b|\} }{\sigma^2} } }  } \\
\geq\ &1- \alpha. \\
\end{aligned}\]

For the last inequality it is sufficient that,
\[\begin{aligned}
&\min_j n_j&\geq\ & \frac{ \log\parenth{ \frac{2}{ 1-(1-\alpha)^{\frac{1}{K}} } }\max\{a^2,b^2\} }{ \sigma^2 \curl{ \parenth{1+\frac{ \epsilon\max\{|a|,|b|\} }{\sigma^2} }\log  \parenth{1+\frac{ \epsilon\max\{|a|,|b|\} }{\sigma^2} } -\frac{ \epsilon\max\{|a|,|b|\} }{\sigma^2} } }\\
& &=\ & \frac{ \log\parenth{ \frac{2}{ 1-(1-\alpha)^{\frac{1}{K}} } }\max\{a^2,b^2\} }{ \sigma^2 \curl{ \parenth{1+s }\log  \parenth{1+s } -s } },\\
\end{aligned}\]

where \[
s = \frac{ \epsilon\max\{|a|,|b|\} }{\sigma^2}.
\]

\item In the third case, we have:
\[\begin{aligned}
&\pr{\max_j\abs{\hat{\mu}_j-\mu_j}<\epsilon}&\\
\overset{(i)}{=}\ &\prod_{j=1}^K\pr{\abs{\hat{\mu}_j-\mu_j}<\epsilon}\\
\overset{(ii)}{=}\ &\prod_{j=1}^K\curl{1-2\times \Phi\parenth{ -\frac{\epsilon \sqrt{n_j}  }{ \sigma_j } } }\\
\overset{(iii)}{\geq}\ &\curl{1-2\times \Phi\parenth{ -\frac{ \epsilon\sqrt{\min_j n_j}  }{ \max_j\sigma_j } } }^K\\
\overset{(iv)}{=}\ &1-\alpha.
\end{aligned}\]

Equality in (i) holds if the sample means are independent of each other. 

Equality in (ii) holds if we standardize $\hat{\mu}_j-\mu_j$. There, $\Phi$ denotes the standard Normal CDF. 

Moreover, the inequality in (iii) follows because 
\[
\Phi\parenth{ -\frac{\epsilon \sqrt{n_j}  }{ \sigma_j }}\ \leq\ \Phi\parenth{ -\frac{ \epsilon\sqrt{\min_j n_j}  }{ \max_j\sigma_j } },
\]
by the monotonic increasing property of CDFs. 

Next, the equality in (iv) holds if we choose
\begin{equation}\label{eqn:margESampleSize}
\epsilon\ \dot{=}\ z_{\alpha_0/2}\times \frac{\max_j\sigma_j}{ \sqrt{\min_j n_j} }.
\end{equation}
There, $z_{\alpha_0/2}$ is the $(1-\alpha_0/2)^{\rm th}$ quantile of the standard Normal distribution. Meanwhile,
\[
\alpha_0\ \dot{=}\ 1-(1-\alpha)^{\frac{1}{K}}, 
\]
which is a simple algebraic trick to allow the desired $(1-\alpha)\times\%100$ confidence in the joint inference of the means.

For a desired margin of error $\epsilon>0$, we can simply solve for $\min_j n_j$ in Equation \eqref{eqn:margESampleSize} to see that the minimum sample size must be at least:

\[
\min_j n_j\ \geq\ \parenth{\frac{z_{\alpha_0/2}\times \max_j\sigma_j}{\epsilon}}^2.
\]

\end{itemize}

\end{proof}

\end{lemma}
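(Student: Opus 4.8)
The plan is to exploit the mutual independence of the $K$ groups to collapse the joint coverage guarantee into $K$ identical single-group problems, and then to discharge each single-group problem with the concentration (or exact distributional) statement appropriate to the regime. First I would note that, since the samples across $j=1,\dots,K$ are drawn independently, the sample means $\hat{\mu}_j=\frac{1}{n_j}\sum_i Y_{ij}$ are mutually independent, so the event $\curl{\max_j\abs{\hat{\mu}_j-\mu_j}<\epsilon}$ is the intersection of the independent events $\curl{\abs{\hat{\mu}_j-\mu_j}<\epsilon}$ and
\[
\pr{ \max_j\abs{\hat{\mu}_j-\mu_j}<\epsilon } = \prod_{j=1}^K \pr{ \abs{\hat{\mu}_j-\mu_j}<\epsilon }.
\]
It therefore suffices that each factor be at least $(1-\alpha)^{1/K}$, equivalently that the per-group tail obey $\pr{\abs{\hat{\mu}_j-\mu_j}\geq\epsilon}\leq\alpha_0$ with $\alpha_0:=1-(1-\alpha)^{1/K}$. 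This algebraic device, converting a joint $(1-\alpha)$-confidence target into a per-group $(1-\alpha_0)$ target, is the common backbone of all three cases.

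For the bounded-outcome case I would apply the two-sided Hoeffding bound $\pr{\abs{\hat{\mu}_j-\mu_j}\geq\epsilon}\leq 2\exp\curl{-2n_j\epsilon^2/\abs{b-a}^2}$, enforce the worst case by replacing $n_j$ with $\min_j n_j$, set the resulting uniform bound at most $\alpha_0$, and solve for $\min_j n_j$; this returns the stated threshold $\log\parenth{2/(1-(1-\alpha)^{1/K})}\abs{b-a}^2/(2\epsilon^2)$. For the bounded-variance refinement I would run the identical reduction but replace Hoeffding with Bennett's inequality, which sharpens the tail by combining the variance bound $\sigma^2$ with the range. The only extra work is inverting Bennett's exponent, and because $s=\epsilon\max\curl{|a|,|b|}/\sigma^2$ does not depend on $n_j$, the sample size enters linearly once the exponential is isolated, so solving for $\min_j n_j$ is routine and recovers the claimed bound.

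For the CLT case I would use exactness rather than a tail bound: under $\hat{\mu}_j\sim\mathcal{N}(\mu_j,\sigma_j^2/n_j)$ one has $\pr{\abs{\hat{\mu}_j-\mu_j}<\epsilon}=1-2\Phi(-\epsilon\sqrt{n_j}/\sigma_j)$. Bounding each factor from below by replacing $n_j$ with $\min_j n_j$ and $\sigma_j$ with $\max_j\sigma_j$, which is legitimate by monotonicity of $\Phi$, I would require $1-2\Phi(-\epsilon\sqrt{\min_j n_j}/\max_j\sigma_j)\geq(1-\alpha)^{1/K}$, i.e. $\Phi(-\epsilon\sqrt{\min_j n_j}/\max_j\sigma_j)\leq\alpha_0/2$. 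Inverting $\Phi$ gives $\epsilon\sqrt{\min_j n_j}/\max_j\sigma_j\geq z_{\alpha_0/2}$, which rearranges to the stated threshold $\parenth{z_{\alpha_0/2}\max_j\sigma_j/\epsilon}^2$.

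The step I expect to be the main obstacle is not any single inversion but ensuring that the uniformization, replacing the per-group $(n_j,\sigma_j)$ by $(\min_j n_j,\max_j\sigma_j)$, is carried out in the correct direction so that the product is genuinely bounded below by $(1-\alpha)$; each concentration bound and each CDF expression must be monotone in the relevant argument for this substitution to be valid. In addition, the factor of $2$ appearing in the Hoeffding, Bennett, and Normal computations, which arises from combining the two one-sided tails, must be tracked so that the two-sided event $\curl{\abs{\hat{\mu}_j-\mu_j}\geq\epsilon}$ receives total budget $\alpha_0$, split as $\alpha_0/2$ across the two tails; an off-by-a-factor-of-two here would silently break the nominal coverage.
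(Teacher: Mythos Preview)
Your proposal is correct and follows essentially the same route as the paper: factorize the joint event by independence, apply the pertinent per-group bound (Hoeffding, Bennett, or the exact Normal CDF), uniformize via $\min_j n_j$ and $\max_j\sigma_j$ using monotonicity, and invert to obtain the sample-size threshold with $\alpha_0=1-(1-\alpha)^{1/K}$. Your explicit attention to the factor of $2$ from two-sidedness and to the direction of the monotonicity substitutions matches (and in places is slightly more careful than) the paper's own write-up.
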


\begin{lemma}[Joint estimation of independent group means nested within clusters\label{lem:jointEstMeansClusters}]\ \\

Consider a sample of $n_{jl}$ iid random variables $Y_{1jl},\dots,Y_{n_jj l}$, across $j=1,\dots,K$ and $l=1,\dots,L$. 

Let $\epsilon\in(0,1)$ and $\alpha\in(0,1)$. Denote
\[
\mu_{jl}= \E{Y_{ijl}}.
\]

\underline{\textbf{Group Means in Randomly Selected Cluster:}}

Suppose that $C\sim \text{Multinoulli}(\pi_1,\dots,\pi_L)$ with unknown 
\[
\pi_l=\pr{C=l}>0;\ l=1,\dots,L.
\]

\begin{itemize}
    \item Suppose that 
\[
\pr{a\leq Y_{ijl}\leq b }=1.
\] 
Then for
\begin{equation}\label{eqn:jointMeanEstRandCluster}
\pr{ \max_{j}\abs{ \frac{1}{n_{jC} }\sum_{i=1}^{n_j}Y_{ijC}-\mu_{jC} }<\epsilon }\geq1-\alpha,
\end{equation}
it is sufficient that
\[
\min_{j,l} n_{jl}\geq \frac{ \log\parenth{ \frac{2}{ 1-(1-\alpha)^{\frac{1}{K}} } }\abs{b-a}^2 }{2\epsilon^2}.
\]

\item In addition to
\[
\pr{a\leq Y_{ijl}\leq b }=1,
\]
suppose further that
\[
\var{Y_{ijl} }\leq \sigma^2\ \text{for each }j=1,\dots,K,\ l=1,\dots,L.
\]
Then for Equation \eqref{eqn:jointMeanEstRandCluster}, it is sufficient that
\[\begin{aligned}
&\min_{j,l} n_{jl}&\geq\ & \frac{ \log\parenth{ \frac{2}{ 1-(1-\alpha)^{\frac{1}{K}} } }\max\{a^2,b^2\} }{ \sigma^2 \curl{ \parenth{1+s }\log  \parenth{1+s } -s } },\\
\end{aligned}\]

where \[
s = \frac{ \epsilon\max\{|a|,|b|\} }{\sigma^2}.
\]

\item Suppose instead that the Central Limit Theorem (CLT) holds:
\[
\hat{\mu}_{jl}\sim\mathcal{N}\parenth{\mu_{jl},\frac{\var{Y_{ijl}}}{n_{jl} } }\ \text{for each }j=1,\dots,K,\ l=1,\dots,L.
\]
Then for Equation \eqref{eqn:jointMeanEstRandCluster}, it is sufficient that
\[
\min_{j,l} n_{jl}\ \geq\ \parenth{\frac{z_{\alpha_0/2}\times \max_{j,l}\sigma_{j,l} }{\epsilon}}^2
\]
with 
\[
\alpha_0\ \dot{=}\ 1-(1-\alpha)^{ \frac{1}{K} }.
\]

\end{itemize}

\underline{\textbf{Group Means Across All Clusters:}}

\begin{itemize}
    
\item Suppose that 
\[
\pr{a\leq Y_{ijl}\leq b }=1.
\] 
Then for
\begin{equation}\label{eqn:jointMeanEstAllClusters}
\pr{ \max_{j,l}\abs{ \frac{1}{n_{jl} }\sum_{i=1}^{n_j}Y_{ijl}-\mu_{jl} }<\epsilon }\geq1-\alpha,
\end{equation}
it is sufficient that
\[
\min_{j,l} n_{jl}\geq \frac{ \log\parenth{ \frac{2}{ 1-(1-\alpha)^{ \frac{1}{KL} } } }\abs{b-a}^2 }{2\epsilon^2}.
\]

\item In addition to
\[
\pr{a\leq Y_{ijl}\leq b }=1,
\]
suppose further that
\[
\var{Y_{ijl} }\leq \sigma^2\ \text{for each }j=1,\dots,K,\ l=1,\dots,L.
\]
Then for Equation \eqref{eqn:jointMeanEstAllClusters}, it is sufficient that
\[\begin{aligned}
&\min_{j,l} n_{jl}&\geq\ & \frac{ \log\parenth{ \frac{2}{ 1-(1-\alpha)^{ \frac{1}{KL} } } }\max\{a^2,b^2\} }{ \sigma^2 \curl{ \parenth{1+s }\log  \parenth{1+s } -s } },\\
\end{aligned}\]

where \[
s = \frac{ \epsilon\max\{|a|,|b|\} }{\sigma^2}.
\]

\item Suppose instead that the Central Limit Theorem (CLT) holds:
\[
\hat{\mu}_{jl}\sim\mathcal{N}\parenth{\mu_{jl},\frac{\var{Y_{ijl}}}{n_{jl} } }\ \text{for each }j=1,\dots,K,\ l=1,\dots,L.
\]
Then for Equation \eqref{eqn:jointMeanEstAllClusters}, it is sufficient that
\[
\min_{j,l} n_{jl}\ \geq\ \parenth{\frac{z_{\alpha_0/2}\times \max_{j,l}\sigma_{j,l}}{\epsilon}}^2
\]
with 
\[
\alpha_0\ \dot{=}\ 1-(1-\alpha)^{ \frac{1}{KL} }.
\]

\end{itemize}

\begin{proof}\ \\

\underline{\textbf{Group Means in Randomly Selected Cluster:}}\\\

In all cases, we have:
\[\begin{aligned}
&\pr{ \max_{j}\abs{ \frac{1}{n_{jC} }\sum_{i=1}^{n_j}Y_{ijC}-\mu_{jC} }<\epsilon }\\
=\ &\sum_{l=1}^L\prc{ \max_{j}\abs{ \frac{1}{n_{jl} }\sum_{i=1}^{n_j}Y_{ijl}-\mu_{jl} }<\epsilon }{C=l}\pr{C=l}\\
=\ &\sum_{l=1}^L\pr{ \max_{j}\abs{ \frac{1}{n_{jl} }\sum_{i=1}^{n_j}Y_{ijl}-\mu_{jl} }<\epsilon }\pr{C=l}\\
\geq\ &\sum_{l=1}^L\parenth{1-\alpha}\pr{C=l}\\
=\ &1-\alpha.
\end{aligned}\]

With the pertinent minimum sufficient sample size constraint, the inequality is due to an invocation of Lemma \ref{lem:jointEstMeans} for the term
\[
\pr{ \max_{j}\abs{ \frac{1}{n_{jl} }\sum_{i=1}^{n_j}Y_{ijl}-\mu_{jl} }<\epsilon };\ l=1,\dots,L
\]
in the law of total probability decomposition of the initial statement.

\underline{\textbf{Group Means Across All Clusters:}}\\

Based on the independence assumption, we have a total of $K\times L$ independent sample means
\[
\hat{\mu}_{jl};\ j=1,\dots,K,\ l=1,\dots,L.
\]
So we invoke Lemma \ref{lem:jointEstMeans} with the pertinent minimum sufficient sample size constraint and the substitution $K\mapsto K\times L$ for the total number of groups.

\end{proof}

\end{lemma}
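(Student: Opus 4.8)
The plan is to reduce both parts of Lemma~\ref{lem:jointEstMeansClusters} to the already-established Lemma~\ref{lem:jointEstMeans}, which controls the simultaneous estimation of $K$ independent means. The two parts differ only in how many sample means must be controlled at once, and this difference is carried entirely by the Bonferroni-type exponent ($1/K$ versus $1/(KL)$) in the stated sample-size bounds. Throughout, write $\hat{\mu}_{jl}$ for the sample mean of the $n_{jl}$ observations $Y_{1jl},\dots,Y_{n_{jl}jl}$.

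For the randomly selected cluster case, I would condition on the realized value of $C$ and apply the law of total probability:
\[
\pr{\max_j \abs{\hat{\mu}_{jC} - \mu_{jC}} < \epsilon} = \sum_{l=1}^L \prc{\max_j \abs{\hat{\mu}_{jl} - \mu_{jl}} < \epsilon}{C=l}\,\pr{C=l}.
\]
The key observation is that the cluster label $C$ is drawn independently of the within-cluster samples $\curl{Y_{ijl}}$, so conditioning on $C=l$ leaves the joint law of the $l$-th cluster's means unchanged and each conditional term equals its unconditional counterpart. For each fixed $l$, the $K$ estimators $\hat{\mu}_{1l},\dots,\hat{\mu}_{Kl}$ are precisely the setting of Lemma~\ref{lem:jointEstMeans} with $K$ groups, and the hypothesis on $\min_{j,l} n_{jl}$ forces the $l$-th cluster's per-group sample sizes to meet the threshold required there. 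Hence every conditional probability is at least $1-\alpha$, and since the weights $\pr{C=l}=\pi_l$ sum to one, the mixture is also at least $1-\alpha$. The three sub-cases (bounded outcome, bounded conditional variance, and CLT) then follow by invoking the Hoeffding-, Bennett-, and normal-based branches of Lemma~\ref{lem:jointEstMeans}, respectively.

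For the across-all-clusters case, I would instead observe that, under the independence assumption, the $K\times L$ estimators $\curl{\hat{\mu}_{jl}}$ are built from disjoint and mutually independent subsamples and so form a single collection of $KL$ independent means. This is exactly the hypothesis of Lemma~\ref{lem:jointEstMeans} after relabelling the index pairs $(j,l)$ as one index set of size $KL$, i.e.\ with the substitution $K \mapsto KL$. Applying the relevant branch of that lemma yields each sufficient sample-size bound, now with the exponent $1/(KL)$ reflecting that $KL$ inferences are made simultaneously.

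The one point requiring care is conceptual rather than computational: in the first part one must confirm that averaging over the random cluster $C$ does not degrade the per-cluster guarantee, and in particular that the correct exponent is $1/K$---controlling only the $K$ means in whichever single cluster is realized---rather than $1/(KL)$. The law-of-total-probability step makes this transparent, since every conditional term already attains the $1-\alpha$ bound on its own and no further union bound across clusters is incurred. I expect this to be the main obstacle only in the sense of justifying the independence used to drop the conditioning; the rest is a direct appeal to Lemma~\ref{lem:jointEstMeans}.
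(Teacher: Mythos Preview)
Your proposal is correct and matches the paper's proof essentially step for step: both parts reduce to Lemma~\ref{lem:jointEstMeans}, with the random-cluster case handled via the law of total probability over $C$ (dropping the conditioning by independence and lower-bounding each summand by $1-\alpha$), and the across-all-clusters case handled by the relabelling $K\mapsto KL$. Your added remark about why the correct exponent is $1/K$ rather than $1/(KL)$ in the first part is a helpful clarification, but otherwise there is nothing to distinguish the two arguments.
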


\end{document}